\def\BibTeX{{\rm B\kern-.05em{\sc i\kern-.025em b}\kern-.08em
    T\kern-.1667em\lower.7ex\hbox{E}\kern-.125emX}}
\newcommand*{\A}[1]{\textcolor{blue}{#1}} 
\newcolumntype{L}[1]{>{\raggedright\let\newline\\\arraybackslash\hspace{0pt}}m{#1}}
\newcolumntype{C}[1]{>{\centering\let\newline\\\arraybackslash\hspace{0pt}}m{#1}}
\newcolumntype{R}[1]{>{\raggedleft\let\newline\\\arraybackslash\hspace{0pt}}m{#1}}
\newtheorem{theorem}{Theorem}
\newtheorem{corollary}{Corollary}
\newtheorem{lemma}{Lemma}
\newtheorem{definition}{Definition}
\begin{document}

\title{A Regularized Wasserstein Framework \\for  Graph Kernels}

\author{\IEEEauthorblockN{Asiri Wijesinghe, Qing Wang, and Stephen Gould}
    \IEEEauthorblockA{School of Computing, Australian National University, Canberra, Australia
    \\\{asiri.wijesinghe, qing.wang,  stephen.gould\}@anu.edu.au}
}
% \author{\IEEEauthorblockN{Anonymous Authors\IEEEauthorrefmark{1}}
%     \IEEEauthorblockA{\IEEEauthorrefmark{1}Research School of Computer Science, the Australian National University
%     \\\{Asiri.Wijesinghe, Qing.Wang and Stephen.Gould\}@anu.edu.au}
% }
\vspace{1cm}

\maketitle

\begin{abstract}
We propose a learning framework for graph kernels, which is theoretically grounded on regularizing optimal transport. This framework provides a novel optimal transport distance metric, namely \emph{Regularized Wasserstein (RW) discrepancy}, which can preserve both features and structure of graphs via Wasserstein distances on features and their local variations, local barycenters and global connectivity. Two \emph{strongly convex} regularization terms are introduced to improve the learning ability. One is to relax an optimal alignment between graphs to be a cluster-to-cluster mapping between their locally connected vertices, thereby preserving the local clustering structure of graphs.
The other is to take into account node degree distributions in order to better preserve the global structure of graphs.
We also design an efficient algorithm to enable a fast approximation for solving the optimization problem. Theoretically, our framework is robust and can guarantee the convergence and numerical stability in optimization. We have empirically validated our method using 12 datasets against 16 state-of-the-art baselines. The experimental results show that our method consistently outperforms all state-of-the-art methods on all benchmark databases for both graphs with discrete attributes and graphs with continuous attributes.
\end{abstract}

\section{Introduction}
Graph kernels offer an appealing paradigm for measuring the similarity between graphs. They have been used in a wide range of fields such as chemoinformatics, bioinformatics, neuroscience, social networks, and computer vision \cite{kriege2020survey, vishwanathan2010graph}. Inspired by Haussler's framework for R-convolution kernels \cite{haussler1999convolution}, most of graph kernels have focused on comparing graphs
based on their substructures such as subtrees, cycles, shortest paths, and graphlets \cite{horvath2004cyclic,borgwardt2005shortest,shervashidze2009efficient,shervashidze2011weisfeiler}. However, due to the intriguing combinatorial nature of graphs, these methods have inherent limitations. For example, they do not take into account feature and structural distributions of graphs; they require substructures to be pre-defined based on domain-specific expertise which is not always available in practical applications.

In recent years, various learning-based graph kernels have been proposed \cite{yanardag2015deep,kriege2020survey}. Among them, several studies have attempted to cast the problem of measuring graph similarity as an instance of computing optimal transport distances for graphs in a kernel-based framework. Nikolentzos et al. \cite{nikolentzos2017matching} introduced a Wasserstein distance metric to compare graphs based on their node embeddings. Later, Togninalli et al. \cite{togninalli2019wasserstein} proposed a method of computing a Wasserstein distance between the node feature distributions of two graphs in the Weisfeiler-Lehman framework \cite{weisfeiler1968reduction}. 
Titouan et al. \cite{titouan2019optimal} combined Wasserstein and Gromov-Wasserstein distances in order to jointly leverage feature and structural information of graphs.
These recent advances have achieved state-of-the-art results for graph classification tasks.

Nevertheless, several technical challenges still remain for developing an effective optimal transport distance metric on graphs. Typically, optimal transport compares two probability distributions by moving one distribution to the other distribution in an optimal way that minimizes a total cost of transporting probability masses \cite{villani2003topics}. In viewing graphs as discrete distributions in a geometric metric space, optimal transport techniques can be used to explore the geometric nature of graphs. However, since optimal transport relies on cost functions to compare graphs but there is no ordering on vertices of a graph, a key challenge is, how to effectively define cost functions that can preserve intrinsic properties of graphs during the transport. Further, real-world graphs are often irregular and exhibit different geometric characteristics. This raises the challenge on how to develop a solid theoretical basis to ensure convergence and numerical stability for optimal transport learning on graphs. %Thus, developing an efficient solution to compute optimal transport distances between graphs is still a challenging task. 

%However, optimal transport relies on defining effective transport costs to compare graphs. Since there is no ordering on vertices of a graph, it is difficult to determine transport costs between different graphs. Further, real-world graphs are often irregular and exhibit different geometric characteristics. This leads to convergence and stability issues for optimal transport learning on graphs. As a result, despite the promising results obtained in prior studies, optimal transport learning on graphs still faces the following challenges:
%\vspace{0.1cm}
%\begin{itemize}
%    \item {How to effectively capture the rich information of graphs into transport cost such that graph kernels can account for intricate properties on graphs?}
%    \item {How to develop a solid theoretical basis for ensuring numerical stability and fast convergence of optimal transport learning on graphs?}
%\end{itemize} \vspace{0.1cm}

\vspace{0.15cm}
\noindent\textbf{Present work.~} To address these challenges, in this paper, we propose a powerful learning framework for graph kernels, namely \emph{Regularized Wasserstein} (RW) framework, which has two desired properties:
(1) it is theoretically robust with guaranteed convergence and numerical stability in optimization; (2) it effectively captures the rich information of graphs into transport costs so that graph kernels can account for intricate structures on graphs, including feature local variation, and local and global structures.
At its core, the RW framework is theoretically grounded on the idea of regularizing optimal transport. Below, we briefly discuss how the RW framework is designed to mitigate these challenges.

 \begin{figure*}[ht!] 
 \centering
 %\vpace{5cm}
    \includegraphics[width=1\textwidth]{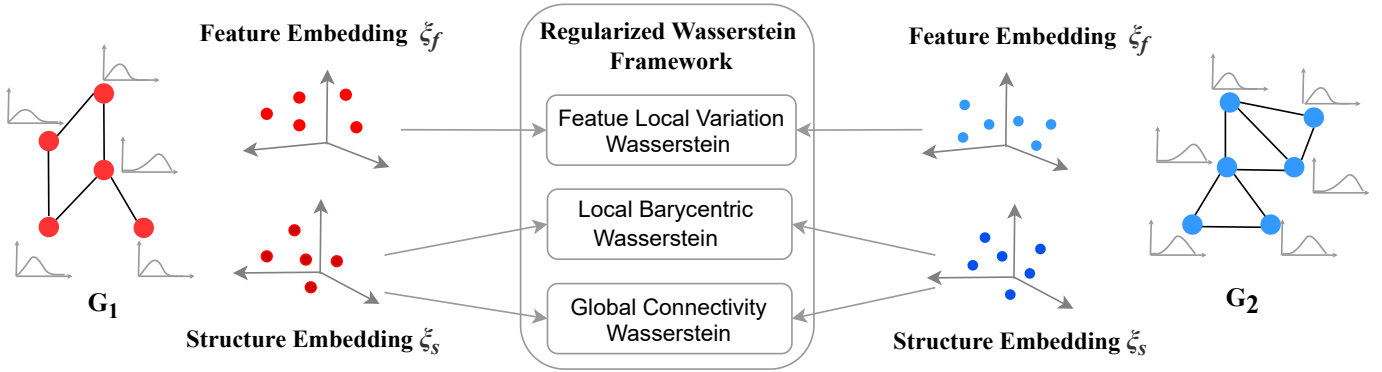}\vspace{0.1cm}
    \caption{An overview of the proposed framework for regularized Wasserstein kernels (RWKs), which unifies feature local variation, local barycentric and global connectivity Wasserstein distances based on feature and structure embeddings.}
\label{fig:overview}
\end{figure*}
 
Previous studies have considered optimal transport learning on graphs \cite{nikolentzos2017matching,togninalli2019wasserstein,maretic2019got}, which generally amounts to two kinds of graph aligning problems:
(1) aligning graphs in the same ground space and (2) aligning graphs across different ground spaces. Recent work has considered methods to jointly deal with these aligning problems based on the similarity of node features and pairwise distances, e.g., \cite{titouan2019optimal}. However, this is still inadequate due to several reasons. Firstly, these methods did not \emph{explicitly} capture the connection between features and structures into transport costs, which limits the learning ability. Secondly, these methods only considered node features based on its local feature aggregation while ignoring local clustering structures. Lastly, these methods did not exploit degree distributions when learning on pairwise distances of vertices.

%Different from existing work, we further propose to capture feature local variations (i.e, variations of graph signals w.r.t their local neighborhood structure) into graph representations, and incorporate them into our proposed learning framework. Feature local variations can quantify how features change upon the underlying graph structure, and thus build a strong link between feature and structural information to help preserve more intricate properties of graphs.

In this work, we propose to capture feature local variations which quantify how features change upon the underlying structures of a graph. We explicitly incorporate feature local variations into feature similarity matrices and accordingly into a cost function to enhance optimal transport learning.
Further, we propose a new optimal transport distance metric on graphs, called \emph{Regularized Wasserstein (RW) discrepancy}. This RW discrepancy regularizes optimal transport learning to compute a distance between graphs via two \emph{strongly convex} regularization terms. One is to regularize a Wasserstein distance between graphs in the same ground space. This regularization relaxes an optimal alignment between graphs to be 
 a cluster-to-cluster mapping between their locally connected vertices, thereby preserving the local clustering structures of vertices across graphs.
The other is to regularize a Gromov-Wasserstein distance between graphs across different ground spaces using a degree-entropy KL divergence term. This regularization considers node degree distributions in order to increase the matching robustness of an optimal alignment, allowing to distribute probability masses smoothly in overlapping regions of the geometric spaces of graphs. Together with feature similarity matrices that capture features and their local variations in cost functions, our regularized optimal transport learning can preserve both local and global structures of graphs during the transport, in addition to features.

Although our framework provides a powerful optimal transport learning for graph kernels, the corresponding optimization problem is NP-hard and thus computationally difficult in the general case, due to its non-convexity and combinatorial nature. To circumvent this problem, we design an efficient algorithm, namely \emph{Sinkhorn Conditional Gradient (SCG)}, which reaps the computational benefits of the proposed strongly convex regularization terms and extends the conditional gradient with \emph{Sinkhorn-knopp} matrix scaling \cite{knight2008sinkhorn} to enable a fast approximation for solving the optimization problem. We theoretically analyse the convergence properties of SCG and prove the upper bound of its minimal suboptimality gap.
%(all proofs are provided in Appendix \ref{appendix-convergence})

\vspace{0.2cm}
\noindent\textbf{Contributions.~} The contributions of this work are as follows.
\begin{itemize}
\item[(1)] We propose a theoretically robust class of graph kernels (i.e., RWKs) based on a new optimal transport distance metric which optimises graph aligning problems in the same or across different ground spaces by exploiting strongly convex regularisation. 
\item[(2)] We improve the geometric representation of graphs by incorporating feature local variations into similarity matrices, which can explicitly preserve the connection between features and structures of a graph. 
\item[(3)] We devise a fast and numerically stable algorithm to solve the optimisation problem and theoretically prove the suboptimal gap of our algorithm converges at the rate of $O(\frac{1}{\sqrt{k}})$ where $k$ is the number of iterations.

%\A{We devise a generalized curvature constant $C_{f+g-h}$ for a composite function, where $(f+g)(.)$ is L-smooth function and $h(.)$ is a $\sigma$-strongly convex function and theoretically prove the generalized curvature constant is upper bounded on $(L-\sigma) \cdot diam_{||.||} (\pi (\mu,\nu))^2$ and we show suboptimal gap of CG converges at the rate of $O(\frac{1}{\sqrt{k}})$ where $k$ is the number of iterations.}
%We devise a fast and numerically stable algorithm (i.e. SCG) to solve the RJW optimisation problem and theoretically prove the suboptimal gap of SCG converges at the rate of $O(\frac{1}{\sqrt{k}})$ where $k$ is the number of iterations.
%\item[(4)] 
\end{itemize}We have evaluated our method for graph classification tasks on 12 benchmark datasets, including both graphs with discrete attributes and graphs with continuous attributes. The results  demonstrate the effectiveness of our method on real-world graphs, i.e., considerably and consistently outperforming all the state-of-the-art methods on all benchmark datasets. 
\section{Related Work}
Graph kernels have been extensively studied in the past years (see the survey by Kriege et al. \cite{kriege2020survey}). Let $\mathcal{G}$ be a non-empty set of graphs. A kernel function $\kappa: \mathcal{G} \times \mathcal{G} \rightarrow \mathbb{R}$ is defined s.t. there exists a map $\phi : \mathcal{G} \rightarrow \mathcal{H}$ with $\kappa(G_i,G_j)=\big<\phi(G_i), \phi(G_j)\big>_{\mathcal{H}}$, where $\mathcal{H}$ refers to a reproducing kernel Hilbert space (RKHS) \cite{haussler1999convolution}. Traditionally, $\kappa$ must be symmetric and positive semidefinite (i.e. a PSD kernel) because this enables kernel-based learning methods such as SVM to solve classification problems efficiently by convex quadratic programming \cite{gu2012learning}. However, many practical applications may produce indefinite kernels \cite{qamra2005enhanced,roth2003optimal,noma2002dynamic} and cannot be theoretically supported in the traditional kernel setting. For example, standard SVM learning with an indefinite kernel is a nonconvex optimization problem \cite{gu2012learning}. 
Therefore, several approaches have been proposed to address the issues of indefinite kernels, e.g., applying spectral transformations to indefinite kernels, reformulating a kernel learning problem into a convex optimization problem, etc. \cite{pekalska2001generalized,roth2003optimal,oglic2018learning,loosli2015learning,chen2009learning}. In this work, our proposed RWK kernels are indefinite. Inspired by the previous work \cite{luss2008support}, we treat indefinite kernels as noisy observations of a true PSD kernel (see a detailed discussion in Section \ref{subsec:graph_kernels}).

Optimal transport has recently received revived
interest from the machine learning community, due to its elegant way to measure the distance between two probability spaces. Following \cite{memoli2011gromov}, \citet{peyre2016gromov} introduced a Gromov-Wasserstein distance to compare pairwise similarity matrices from different metric spaces. Later, several studies have devoted to distance metrics for graphs. \citet{titouan2019optimal} proposed a fused Gromov-Wasserstein distance to combine Wasserstein and Gromov-Wasserstein distances in order to jointly leverage feature and structural information of graphs. To capture global graph structure, \citet{maretic2019got} proposed a Wasserstein distance between graph signal distributions by resorting to graph Laplacian matrices. This method was initially constrained to graphs of the same sizes, but recently extended to graphs of different sizes by formulating graph matching as a one-to-many assignment problem \cite{maretic2020wasserstein}.  \citet{xu2019gromov} proposed to jointly align graphs and learn node embeddings using a Gromov-Wasserstein distance. 
To reduce computational complexity, Gromov-Wasserstein distances are often computed using a Sinkhorn algorithm \cite{cuturi2013sinkhorn,sinkhorn1967diagonal}. Recently, a scalable method was proposed in \cite{xu2019scalable} to recursively partition and align large-scale graphs based on a Gromov-Wasserstein distance. %In this work, we study an optimal transport learning framework for graph kernels.

\vspace{0.2cm}
\section{Regularized Optimal Transport}

Let $G = (V, E)$ be an undirected graph where $V$ is a set of vertices and $E$ is a set of edges. %, and $A \in \mathbb{R}^{n\times n}$ is an adjacency matrix which encodes the weights of edges. 
A \emph{feature embedding} function $\xi_f: V\rightarrow \mathbb{R}^{m}$ associates each vertex with a feature representation in a metric space $(\mathbb{R}^{m}, d_f)$. A \emph{structure embedding} function $\xi_s: V\rightarrow \mathbb{R}^{k}$ associates each vertex with a structural representation in a metric space $(\mathbb{R}^{k}, d_s)$.

Now, we define the notion of discrete probability distribution for graphs \cite{titouan2019optimal}. Let $\Sigma_n := \{\mu \in \mathbb{R}^{n}_{+} : \sum_{i}^{n} \mu_i=1\}$ be a histogram which encodes the weight $\mu_i$ of each vertex $v_i\in V$ according to some prior information, e.g. uncertainty or relative importance. We set $\mu =(1/n) \textbf{1}_n$ (i.e., uniform distribution) if no prior information is available, where $\textbf{1}_n$ is a $n$-dimensional vector of ones. Then, a graph $G$ can be represented as a discrete probability distribution in the product space of $(\mathbb{R}^{m}, d_f)$ and $(\mathbb{R}^{k}, d_s)$, where $\delta$ refers to a Dirac function that corresponds to the feature and structure embeddings of vertices:%\A{$\delta$ encodes the discrete transportation data of the vertex $v_i$ such as structural and feature embedding that we need to transport from source to target distribution.}: probability
\begin{equation}\label{equ:graph-ot}
p= \sum_{i=1}^{n}\mu_i\delta(\xi_f(v_i),\xi_s(v_i)).
\end{equation}

Given two graphs $G_1$ and $G_2$ with $n_1$ and $n_2$ vertices, respectively, we denote their discrete probability distributions as $\mu \in \Sigma_{n_1}$ and $\nu \in \Sigma_{n_2}$. The set of probabilistic couplings between $G_1$ and $G_2$ is defined as:
\begin{equation*}
\label{equ:stochastic-mat}
\pi(\mu,\nu) =\Big\{\gamma \in \mathbb{R}_+^{{n_1} \times {n_2}}\hspace{0.1cm}|\hspace{0.1cm} \gamma \textbf{1}_{n_2}=\mu, \gamma^T \textbf{1}_{n_1}=\nu \Big\}.
\end{equation*}

In this work, we aim to formalize a regularized optimal transport problem for graph kernel learning by finding an optimal coupling $\hat{\gamma}$ between two graphs:
\begin{equation}\label{equ:entropy-reg}
\hat{\gamma} = \underset{\gamma \in \pi(\mu,\nu)}{\text{argmin}} \big<\gamma, \mathbf{C}\big>_F + \lambda\Theta(\gamma),
\end{equation}
where $\mathbf{C}$ is a cost function matrix which measures the cost of moving a probability mass from $\mu$ to $\nu$, $\big<.,.\big>_F$ denotes the Frobenius dot product, $\lambda\in [0, 1]$ and $\Theta(\gamma)$ is a regularizer on $\gamma$. Then, given a set of graphs $\mathcal{G}$, we define a graph kernel: $\mathcal{G}\times \mathcal{G} \rightarrow \mathbb{R}$ where the kernel value for each pair of graphs in $\mathcal{G}$ is defined upon their optimal transport distance. 

We will first introduce graph similarity matrices used for cost functions in Section \ref{sec:optimal-transport}, and then present in detail how to define such a regularized optimal transport problem for a graph kernel in Section \ref{sec:proposed-method}. 

\vspace{0.2cm}
\section{Graph Similarity Matrices}
\label{sec:optimal-transport}
%To describe the geometry of a graph, a major difficulty lies in lack of the absolute positions for their vertices. 
In this section we discuss the feature and structural representations of graphs and several cost functions for optimal transport learning on graphs. 

%\vspace{0.2cm}
%\noindent\textbf{Feature similarity.~}
\subsection{Feature Similarity}
Following the previous work \cite{maretic2019got}, we consider features residing on vertices as graph signals. For a graph $G=(V,E)$, a \emph{graph signal} is a mapping $V\rightarrow \mathbb{R}$ that associates a feature to a vertex. Thus, each graph has a \emph{graph signal matrix} $\mathbf{X} \in \mathbb{R}^{n\times m}$, where $n=|V|$ is the number of vertices in the graph and each vertex $v_i$ is associated with graph signals $x_i\in \mathbb{R}^m$.% represented as a feature embedding vector $x_i=\xi_f(v_i)$.
%and define a \emph{feature similarity matrix} between $G_1$ and $G_2$ as $\mathbf{C}^F(i,j)=(d_f(x_{i}, x_{j}))_{i,j}\in \mathbb{R}^{n_1 \times n_2}$. This feature similarity matrix enables us to compare features of vertices between two graphs directly.

%Thus, each vertex $v_i\in V$ is associated with a feature vector \q{$[x_{i1}, \dots, x_{im}]$ (?)}. %in a metric space(Rm1,d1)

To quantify how graph signals change from a vertex to its neighboring vertices, we formulate the notion of  feature local variation.
Let $\mathbf{L}=\mathbf{I}-\mathbf{D}^{-1/2} \mathbf{AD}^{-1/2}$ be the normalised graph Laplacian of $G$, where $\mathbf{D}$ is the diagonal matrix, $\mathbf{A}$ is the adjacency matrix and $\mathbf{I}$ is the identity matrix. Then the \emph{local variation matrix} of $G$ is defined as: % by the difference of its graph signals $x_i$ and the local neighborhood aggregation over the graph Laplacian, i.e., 
% Then, $ \mathbf{X_L}$ represents the shifted version of graph signals $\mathbf{X}$ w.r.t $\mathbf{L}$. The \emph{local variation} function on $\mathbf{X}$ is defined by graph signals and its shifted version w.r.t the graph Laplacian, i.e., 
% TV quantity as a measure of the difference between a graph signal and its shifted version: For a vertex $v_i$, a graph signal
\begin{equation}\label{equ:tv}
\Delta(\textbf{X})= \left\vert \textbf{X} - \frac{\mathbf{L}^j\mathbf{X}}{\lambda_{max}(\mathbf{L})} \right\vert.
\end{equation}
$\mathbf{L}^j\textbf{X}$ refers to aggregated graph signals of all vertices in $G$ within the j-hop neighborhood. $\lambda_{max}(\mathbf{L})$ is the maximum eigenvalue of $\mathbf{L}$, which normalises $\mathbf{L}^j\mathbf{X}$ to ensure the numerical stability. $\Delta(\textbf{X})$ represents the local variations of features  computed by taking the difference between the original graph signal matrix $\textbf{X}$ and the aggregated graph signal matrix $\mathbf{L}^j\textbf{X}$.  %\q{The \emph{local variation} on vertex $v_i$ is defined as $x_i=[\Delta(X)]_i$, where $[\Delta(X)]_i$ represents the $i$-th row of $\Delta(X)$}.
%\cite{sandryhaila2014discrete}. %This shrinks the eigenvalues range of $L$ to \A{prevent the inordinate scaling of the shifted signal} . 

%We refer to $\Delta(\mathbf{X})\in \mathbb{R}^{n\times m}$ as the \emph{local variation matrix} of $\mathbf{X}$. 

%Given two graphs $G_1$ and $G_2$, we define a \emph{variation similarity matrix} between $G_1$ and $G_2$ in terms of the local variations of their graph signals, i.e., $\mathbf{C}^V(i,j)=(d_f(\Delta(x_i), \Delta(x_j)))_{i,j}\in \mathbb{R}^{n_1 \times n_2}$, where $\Delta(x_i)$ represents the local variation of graph signals of the $i\text{-}th$ vertex in $G_1$ and  $\Delta(x_j)$ represents the local variation of graph signals of the $j\text{-}th$ vertex in $G_2$. 

Let $x_i\in \mathbb{R}^m $ and $\Delta(x_i)\in \mathbb{R}^m $ refer to the graph signals of a vertex $v_i$ and its local variation in $G$, respectively. Then, each vertex $v_i$ corresponds to a feature embedding vector $a_i=\xi_f(v_i)\in \mathbb{R}^{2m} $ such that $a_i=x_i \oplus \Delta(x_i)$, where $\oplus$ refers to the concatenation. Given two graphs $G_1$ and $G_2$, a \emph{feature similarity matrix} between $G_1$ and $G_2$ is defined upon the concatenation of their graph signals and local variations, i.e., $\mathbf{C}^V(i,j)=(d_f(a_i, a_j))_{i,j}\in \mathbb{R}^{n_1 \times n_2}$, where $a_i$ and $a_j$ are the feature embedding vectors of the $i\text{-}th$ vertex of $G_1$ and $j\text{-}th$ vertex of $G_2$, respectively.

\begin{figure*}[ht!] 
 \centering
 %\vpace{5cm}
    \includegraphics[width=1\textwidth]{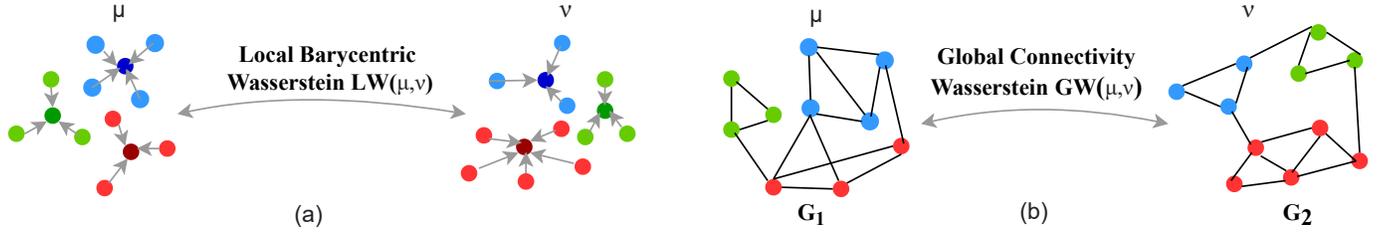}
    \caption{(a) shows the local barycentric Wasserstein distance that transports each vertex in $\mu$ to a spatially localized barycenter of its corresponding neighbors in $\nu$ and vice versa; (b) shows the global connectivity Wasserstein distance that captures the pairwise similarity between vertices under the preservation of degree distributions.}
\label{fig:overview}
\end{figure*}

%\vspace{0.2cm}
%\noindent\textbf{Structure similarity.~}
\subsection{Structure Similarity}
\label{subsec:node-embeddings}
For each vertex $v_i\in V$ in a graph, we associate it with a node embedding vector $e_i=\xi_s(v_i)\in \mathbb{R}^k$. The node embeddings are learned using heat kernel random walks with graph attention. More specifically, we construct a probability transition matrix $\mathbf{M}=e^{-t\mathbf{L}}$, where $t$ is the length of random walks and $\mathbf{L}$ is the graph Laplacian. A graph attention mechanism guides the sampling process of random walks to optimize an objective {Negative Log Graph Likelihood} \cite{abu2018watch}.

Based on the node embeddings, we consider the following two kinds of structure similarity:
\begin{itemize}
    \item[(1)] \emph{Neighbourhood similarity.~} For two graphs $G_1$ and $G_2$, we define a \emph{neighbourhood similarity matrix} as $\mathbf{C}^N(i,j)=(d_s(e_{i}, e_{j}))_{i,j}\in \mathbb{R}^{n_1 \times n_2}$ where $e_{i}$ and $e_{j}$ represent the node embeddings of the $i\text{-}th$ vertex of $G_1$ and the $j\text{-}th$ vertex of $G_2$, respectively.
    \item[(2)] \emph{Pairwise similarity.~} For a graph $G$, we construct a \emph{pairwise similarity matrix} by $\mathbf{C}^P(i,j)=(d_s(e_{i}, e_{j}))_{i,j}  \in \mathbb{R}^{n \times n}$, where $e_{i}$ and $e_{j}$ represent the node embeddings of the $i\text{-}th$ vertex and and the $j\text{-}th$ vertex of $G$. Let $\mathbf{C}^P_1 \in \mathbb{R}^{n_1 \times n_1}$ and $\mathbf{C}^P_2 \in \mathbb{R}^{n_2 \times n_2}$ represent the pairwise similarity matrices of two graphs $G_1$ and $G_2$, respectively. Then, the pairwise similarity between $G_1$ and $G_2$ is defined as a $4 \text{-} \emph{dimensional}$ tensor:
\begin{equation*}\label{equ:pairwis}
L_2(\mathbf{C}^P_1(i,j),\mathbf{C}^P_2(k,l))= \frac{1}{2} |\mathbf{C}^P_1(i,j) - \mathbf{C}^P_2(k,l)|^2.
\end{equation*}
\end{itemize} 

%the quadratic loss $L_2(a,b) = \frac{1}{2} |a \texttt{-} b|^2$ as in \cite{peyre2016gromov}.
%\begin{equation}
%GW_{\alpha}(\mu, \nu)=\underset{\gamma \in \pi(\mu,\nu)}{\text{argmin}}\Big(\sum_{i,j,k,l} L_{\alpha}(\mathbf{C}^P_1(i,j),\mathbf{C}^P_2(k,l)) \gamma_{i,k} \gamma_{j,l} \Big) 
%\q{\ell_{i,j,k,l}(\mathbf{C}^P_s, \mathbf{C}^P_t)= |\mathbf{C}^P_s(i,k) - \mathbf{C}^P_t(j,l)|}
%\end{equation}

\vspace{0.1cm}
\section{Regularized Wasserstein Framework}
\label{sec:proposed-method}
In this section, we introduce a novel optimal transport framework for graphs. This framework can preserve local and global graph structures by jointly optimising two regularized optimal transports on graphs: (1) local barycentric Wasserstein distance; (2) global connectivity Wasserstein distance. We discuss these two kinds of Wasserstein distances in turn. 

\subsection{Local Barycentric Wasserstein Distance}\label{subsec:wasserstein} 
We first propose a local-structure-preserving optimal transport based on Laplacian regularization \cite{ferradans2013regularized,flamary2014optimal}. To preserve the local structure of graphs, we observe that a relaxed mapping (i.e., cluster-to-cluster) between locally connected vertices of two graphs is often more desirable than a strict one-to-one correspondence between vertices of two graphs. Thus, we design a regularization term $\Theta_w(\gamma)$ under a relaxation of transport mass conservation \cite{fournier2015rate} to regularize a Wasserstein distance defined on the neighbourhood similarity matrix $\mathbf{C}^{N}$: 
\begin{equation}\label{equ:graph-wasserstein}
LW(\mu,\nu) = \underset{\gamma \in \pi(\mu,\nu)}{\text{min}} \big<\gamma, \mathbf{C}^{N}\big>_F +  \Theta_w(\gamma),%^{\alpha}
\end{equation}
where $\big<.,.\big>_F$ denotes the Frobenius dot product.

In the following, we discuss how $\Theta_w(\gamma)$ is designed. Essentially, $\gamma(i, j)$ indicates how much the probability mass of the $i \text{-} th$ vertex in one graph $\mu$ is transported to the $j \text{-} th$ vertex in the other graph $\nu$. Thus, we define a transport map $T$ from $\mu$ to $\nu$ by mapping the node embedding of each vertex $e_i^{\mu}$ in $\mu$ to a weighted average $\hat{e}_i^{\mu}$ of the node embeddings of vertices in $\nu$:  %\cite{ferradans2013regularized} 
\begin{equation}\label{equ:barycenter-mapping}
\hat{e}_i^{\mu}=T(e_i^{\mu}) = \frac{\sum_{j=1}^{n_2} \gamma(i,j) e_j^{\nu}}{\sum_{j=1}^{n_2} \gamma(i,j)}.
\end{equation}

Let $\mathbf{E}_{\mu} \in \mathbb{R}^{n_{1} \times k}$ (resp. $\mathbf{E}_{\nu} \in \mathbb{R}^{n_{2} \times k}$) be a node embedding matrix of $\mu$ (resp. $\nu$). We thus have the following matrix of local barycentric embeddings:
\begin{equation}\label{equ:vector-barycenter-mapping-xs}
 \hat{\mathbf{E}}_{\mu} = T(\mathbf{E}_{\mu}) = (diag(\gamma \textbf{1}_{n_{2}}))^{-1} \gamma \mathbf{E}_{\nu},
\end{equation}
where $diag(.)$ is a diagonal matrix in $\mathbb{R}^{n_{1} \times n_{1}}$. To preserve the local structure of vertices in $\mu$ under $T$, we define a spatially localized barycentric term as the \emph{source regularization}: 
\begin{equation}\label{equ:graph-reg1}
\begin{split}
\Omega_{\mu}(\gamma) &= \frac{1}{n_{1}^2} \sum_{i,j} a_{i,j} |\!|\hat{e}^{\mu}_i - \hat{e}^{\mu}_j|\!|_2^2\\
&= \frac{1}{n_{1}^2} tr(\hat{\mathbf{E}}_{\mu}^T \mathbf{L}_{\mu} \hat{\mathbf{E}}_{\mu}).\\
\end{split}
\end{equation}
$\mathbf{L}_{\mu}$ is the graph Laplacian and $\mathbf{A}_{\mu}=(a_{i.j})_{i,j=1}^{n_1}$ is the adjacency matrix of $\mu$. %When the marginal distributions for vertices in {$\mu$ and $\nu$ are} uniform, 
When $\mu$ and $\nu$ are uniform distributions,
$\hat{\mathbf{E}}_{\mu} =  n_{1} \gamma \mathbf{E}_{\nu}$ % and $ \hat{E^t} = n_t \gamma^T E^s$ 
and thus %\cite{ferradans2013regularized,carreira2014lass},
\begin{equation}\label{equ:graph-reg2}
\Omega_{\mu}(\gamma) = tr(\mathbf{E}_{\nu}^T \gamma^T \mathbf{L}_{\mu} \gamma \mathbf{E}_{\nu}).
\end{equation}
%$\Omega_s(\gamma)$ is quadratic w.r.t $\gamma$. 
Similarly, we define a spatially localized barycentric term $\Omega_{\nu}(\gamma)$ as the \emph{target regularization} to preserve the local structure of vertices in $\nu$ under the transport map $T^{-1}$. By $\Omega_{\mu}(\gamma)$ and $\Omega_{\nu}(\gamma)$, we obtain the following regularization term to constrain local barycentric Wasserstein distance, where $0 \le \lambda_{\mu}, \lambda_{\nu} \le 1$: 
\begin{equation}\label{equ:graph-reg3}
\Theta_w(\gamma) = \lambda_{\mu} \Omega_{\mu}(\gamma) + \lambda_{\nu} \Omega_{\nu}(\gamma) + \frac{\rho}{2}||\gamma||_F^2.
\end{equation}

%When $\lambda_e \to \infty$ in Eq.~\ref{equ:entropy-reg} and if $\gamma_0 = 1/n_s n_t$, $\forall{i,j}$, then each transported source domain point converges toward the center of mass of the target domain distribution that is $\hat{X_s}=1/n_t \sum_j x_t^j$.

%Note that, the unregularized Wasserstein optimization problem on $\big<\gamma, \mathbf{C}^{N}\big>_F$ is a linear programming problem. Such a problem is costly to solve \cite{cuturi2013sinkhorn,ahyja1993network}, and the time complexity is super-cubic, i.e. $O(N^3 log(N))$ where $N=max\{n_1, n_2\}$ \cite{ahyja1993network}. Further, the optimal solution is not always unique and numerically stable.

%to reduce the sparsity of mapping which helps to smooth the mass conservation relaxation by biasing the nodes assignment towards to the target domain.
This regularization term enables us to avoid the strict mass conservation (i.e, a bijective mapping between $\mu$ and $\nu$) because each vertex in $\mu$ is transported to a spatially localized barycenter of its corresponding neighbors in $\nu$ and vice versa. A penalty term $||\gamma||_F^2$ is introduced to smooth the transport mass conservation. The parameter $\rho \in (0,1]$ controls the degree of smoothness.
%The resulting many-to-many vertex assignments between $\mu$ and $\nu$ help increase the matching performance of graphs.
%by \A{biasing the each vertex assignment towards to its corresponding barycentric neighborhood between both densities}
%\q{the vertex assignment towards the target domain}
%\A{to increase the density of mapping that helps to smooth the transport mass conservation by further biasing the nodes assignment towards to the target domain.} increasing each vertex assignment towards to its corresponding barycentric neighborhood between both densities
  
%\A{This additional penalty term $||\gamma||_F^2$, also can make the local barycentric Wasserstein objective in Eq.~\ref{equ:graph-wasserstein} strictly convex w.r.t $\gamma$, thus allowing a unique and stable solution.}
%\q{to obtain the smooth optimal transport plan by enforcing the density of mapping which helps to stabilize the mass conservation relaxation by avoiding the bias of irrelevant nodes assignment towards to the target domain.(unclear/unconvincing why these can be achieved - to discuss)}

%The following lemma can be proven by showing the convexity of $\lambda_{\mu} \Omega_{\mu}(\gamma) + \lambda_{\nu} \Omega_{\nu}(\gamma)$ and \A{the smooth} strong convexity of $||\gamma||_F^2$.
\begin{lemma}\label{lem:local}
$LW(\mu,\nu)$ is strongly convex and smooth w.r.t. $\gamma$.  %strictly
\end{lemma}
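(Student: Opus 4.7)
The plan is to show that the objective function that defines $LW(\mu,\nu)$, namely $F(\gamma) = \langle \gamma, \mathbf{C}^{N}\rangle_F + \Theta_w(\gamma)$ with $\Theta_w(\gamma) = \lambda_\mu \Omega_\mu(\gamma) + \lambda_\nu \Omega_\nu(\gamma) + \tfrac{\rho}{2}\|\gamma\|_F^2$, is strongly convex and smooth in $\gamma$. Once $F$ is shown to be $\rho$-strongly convex and $L$-smooth on the (convex) feasible set $\pi(\mu,\nu)$, the stated property of $LW(\mu,\nu)$ follows.

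First I would rewrite each regularizer as a squared Frobenius norm of a linear map of $\gamma$. Under the uniform-distribution assumption of \eqref{equ:graph-reg2}, and using that the normalized Laplacian $\mathbf{L}_\mu$ is positive semidefinite so that $\mathbf{L}_\mu^{1/2}$ is well defined, I get
\begin{equation*}
\Omega_\mu(\gamma) = \mathrm{tr}(\mathbf{E}_\nu^T \gamma^T \mathbf{L}_\mu \gamma \mathbf{E}_\nu) = \|\mathbf{L}_\mu^{1/2}\gamma \mathbf{E}_\nu\|_F^2,
\end{equation*}
and similarly $\Omega_\nu(\gamma) = \|\mathbf{L}_\nu^{1/2}\gamma^T \mathbf{E}_\mu\|_F^2$. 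Each is therefore the square of a Euclidean seminorm of a linear function of $\gamma$, which is convex. Equivalently, after vectorization, their Hessians are $2(\mathbf{E}_\nu\mathbf{E}_\nu^T \otimes \mathbf{L}_\mu)$ and $2(\mathbf{E}_\mu\mathbf{E}_\mu^T \otimes \mathbf{L}_\nu)$, both Kronecker products of PSD matrices and hence PSD.

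Next, since the transport-cost term $\langle \gamma,\mathbf{C}^N\rangle_F$ is linear in $\gamma$ (Hessian zero, hence both convex and $0$-smooth), and $\tfrac{\rho}{2}\|\gamma\|_F^2$ contributes $\rho I$ to the Hessian, I can add everything up to obtain
\begin{equation*}
\nabla^2 F(\gamma) = 2\lambda_\mu (\mathbf{E}_\nu\mathbf{E}_\nu^T \otimes \mathbf{L}_\mu) + 2\lambda_\nu (\mathbf{E}_\mu\mathbf{E}_\mu^T \otimes \mathbf{L}_\nu) + \rho I.
\end{equation*}
The first two summands are PSD while the last has smallest eigenvalue $\rho > 0$, so $\nabla^2 F \succeq \rho I$, proving $\rho$-strong convexity of $F$ on $\pi(\mu,\nu)$.

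For smoothness I would upper-bound the spectral norm of the Hessian using $\|A\otimes B\|_2 = \|A\|_2\|B\|_2$, giving
\begin{equation*}
L \;=\; 2\lambda_\mu \|\mathbf{E}_\nu\|_2^2 \|\mathbf{L}_\mu\|_2 + 2\lambda_\nu \|\mathbf{E}_\mu\|_2^2 \|\mathbf{L}_\nu\|_2 + \rho,
\end{equation*}
which is finite since all four matrix norms are finite (indeed $\|\mathbf{L}_\mu\|_2,\|\mathbf{L}_\nu\|_2 \le 2$ for the normalized Laplacian). This yields $\nabla F$ is $L$-Lipschitz, i.e.\ $F$ is $L$-smooth. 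I expect the only real obstacle to be bookkeeping: getting the vec/Kronecker identity right for the two $\Omega$ terms and ensuring that the assumption of uniform marginals made in \eqref{equ:graph-reg2} is invoked cleanly (for non-uniform $\mu,\nu$ the barycentric map involves a $\mathrm{diag}(\gamma\mathbf{1})^{-1}$ factor and the quadratic-form reduction is less direct, so I would either restrict to the uniform case used by the paper or handle the diagonal rescaling as a bounded linear operator on the interior of $\pi(\mu,\nu)$).
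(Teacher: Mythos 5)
Your proof is correct and follows essentially the same route as the paper's: both compute the (vectorized) Hessian of $\Theta_w$, invoke positive semidefiniteness of Kronecker products of PSD matrices for the two Laplacian regularizers, and extract $\rho$-strong convexity from the $\frac{\rho}{2}\|\gamma\|_F^2$ penalty, with the linear cost term contributing nothing to the Hessian. Your smoothness step is in fact slightly more careful than the paper's (which merely asserts $L$-smoothness from positive definiteness of the Hessian), since you note that the Hessian is constant and bound its spectral norm explicitly to exhibit a concrete Lipschitz constant $L$.
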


\begin{proof}
Let $f_1(\gamma) = \lambda_{\mu} \Omega_{\mu}(\gamma) + \lambda_{\nu} \Omega_{\nu}(\gamma)$ and $f_2(\gamma) = \frac{\rho}{2}||\gamma||_F^2$.
The Hessian of $\Omega_{\mu}(\gamma)$ is,
\begin{equation}\label{equ:hessian-omega-mu}
\nabla^2 \Omega_{\mu}(\gamma) = \mathbf{L_{\mu}} \otimes \mathbf{E_{\nu} E_{\nu}^T} + \mathbf{L_{\mu}^T} \otimes \mathbf{E_{\nu} E_{\nu}^T},
\end{equation}
where $\otimes$ denotes the Kronecker product. $\mathbf{L_{\mu}}$ is positive semi-definite since its eigenvalues are non-negative. We also have $z^T(\mathbf{E_{\nu}E_{\nu}^T})z=||\mathbf{E_{\nu}^T} z||_2^2 \ge 0$ for every $z \neq 0$ and $z \in \mathbb{R}^{n_2 \times 1}$, which is positive semi-definite. Thus, $\mathbf{L_{\mu}} \otimes \mathbf{E_{\nu}E_{\nu}^T}$ is positive semi-definite since the Kronecker product of two positive semi-definite matrices is positive semi-definite \cite{schacke2004kronecker}. Therefore, $\Omega_{\mu}(\gamma)$ is convex, and similarly, we can show $\Omega_{\nu}(\gamma)$ is convex. Hence, $f_1(\gamma)$ is convex w.r.t $\gamma$. Since the function $||\gamma||_F^2$ is quadratic w.r.t $\gamma$, the Hessian of $f_2(\gamma)$ is positive definite. Hence, $f_2(\gamma)$ is strongly convex. Then, the sum of $f_1(\gamma)+f_2(\gamma)$ (i.e., $\Theta_w(\gamma)$) is $\rho$-strongly convex. 
Since $f_1(\gamma)$ is positive semi-definite and $f_2(\gamma)$ is positive definite, $\Theta_w(\gamma)$ is positive definite. Hence, $\Theta_w(\gamma)$ is $L$-smooth for some constant $L>0$.

Since $\big<\gamma, \mathbf{C}^{N}\big>_F$ is convex and $\Theta_w(\gamma)$ is strongly convex and smooth, $LW(\mu,\nu)$ is strongly convex and smooth.
%We can also proof $\Theta_w(\gamma)$ is a \emph{L-smooth} function, since we have, $z^T (\nabla^2 \Theta_w(\gamma)) z \le L_f ||z||^2$ for every non-zero column vector $z \neq 0$ and $L_f > 0$. By imposing $\sigma$-\emph{strongly convexity} and $L_f$-\emph{smoothness} together, we have, $\sigma ||z||^2 \le z^T (\nabla^2 \Theta_w(\gamma)) z \le L_f ||z||^2$, $\forall \gamma \in$ dom $\Theta_w$, where $\sigma, L_f$ are positive and specifically we must have $\sigma \le L_f$.
\end{proof}

%\begin{corollary}\label{cor:w}
%$LW(\mu,\nu)$ is smooth and strongly convex w.r.t. $\gamma$.
%\end{corollary}

\subsection{Global Connectivity Wasserstein Distance}\label{subsec:gw} 
To preserve
the global structure of graphs during the transport, such as structure connectivity, a straightforward approach is to use a Gromov-Wasserstein discrepancy based on pairwise similarity between vertices. However, solving such an unregularized Gromov-Wasserstein optimization problem on $\big<\gamma, L_{2}(\mathbf{C}^P_{\mu}, \mathbf{C}^P_{\nu})\otimes \gamma\big>_F$ may lead to a sparse coupling matrix $\gamma$, i.e. the entries of $\gamma$ become mostly zero. As a result, only few vertices between two graphs can be matched. Further, the degree distributions between graphs need to be considered for preserving structure connectivity, Thus, we design a \emph{degree-entropy regularization term} $\Theta_g(\gamma)$ to regularize a Gromov-Wasserstein distance on the pairwise similarity matrix $\mathbf{C}^{P}$: % is to smoothen an optimal coupling between two graphs:%  The higher the value of $\lambda_g$ is, the denser an optimal coupling becomes.  
%\A{We reduce the uncertainty of the coupling matrix by considering its prior knowledge with the relative entropy regularization. Minimizing this relative entropy guarantees the high probability values in coupling matrix for the vertices that have similar local neighborhood assignments, which is able to generate denser coupling matrix.}
\begin{equation*}\label{equ:graph-gw}
GW(\mu,\nu) = \underset{\gamma \in \pi(\mu,\nu)}{\text{min}} \big<\gamma, L_{2}(\mathbf{C}^P_{\mu}, \mathbf{C}^P_{\nu})\otimes \gamma\big>_F - \lambda_g \Theta_g(\gamma), %\ell ^{\alpha}
\end{equation*}
 %$\Theta_g(\gamma) = \sum_{i,j} \gamma(i,j) log \gamma(i,j)$ computes the entropy of $\gamma$. %$\lambda_g \in [0,1]$ controls the trade-off between GW discrepancy and the entropy regularization. 
where $\lambda_g \in (0,1]$ and $\big<\gamma, L_{2}(\mathbf{C}^P_{\mu}, \mathbf{C}^P_{\nu})\otimes \gamma\big>_F=\sum_{i,j,k,l} L_{2}(\mathbf{C}^P_{\mu}(i,j), \mathbf{C}^P_{\nu}(k,l)) \gamma(i,k) \gamma(j,l)$. Specifically, we define $\Theta_g(\gamma)$ as a KL divergence between $\gamma$ and a prior node degree distribution $\gamma'$:
\begin{equation}\label{equ:kl-divergence}
\Theta_g(\gamma)=KL(\gamma\|\gamma')= \sum_{i,j} \gamma(i,j) log \Big(\frac{\gamma(i,j)}{\gamma'(i,j)} \Big). 
\end{equation}
Let $D_{\mu} \in \mathbb{R}^{n_1}$ and $D_{\nu} \in \mathbb{R}^{n_2}$ represent the node degree vectors of graphs $G_1$ and $G_2$, respectively.
\begin{equation}
\begin{split}
\gamma'(i,j)=\frac{\Tilde{\gamma}(i,j)}{||\sum_{j} \Tilde{\gamma}(i,j)||_1}\\
\Tilde{\gamma}(i,j) = 1-\frac{|D_{\mu}^i - D_{\nu}^j|}{\text{max}\{D_{\mu}^i, D_{\nu}^j\}}
\end{split}
\end{equation}

%%\begin{equation}\label{equ:kl-divergence}
%KL(\gamma\|\gamma') = \sum_{i,j} \gamma(i,j) log \Big(\frac{\gamma(i,j)}{\gamma'(i,j)} \Big).
%\end{equation}
Note that, depending on how pairwise similarity matrices are defined, different kinds of global structures can be preserved. When $\mathbf{C}^P_{\mu}$ and $\mathbf{C}^P_{\nu}$ are shortest path distance matrices, we preserve the connectivity structure of graphs. Other options include adjacency matrices and graph Laplacians~\cite{schieber2017quantification}. %\cite{chowdhury2020generalized}.

   %When $\lambda \to \infty$, the coupling converges toward $\gamma \to 1/n_s n_t$, \q{$\forall{i,j}$} \cite{cuturi2013sinkhorn}. 

%The lemma below states the strong convexity of $KL(\gamma||\gamma')$. 
 
\begin{lemma}\label{lem:kl}
$KL(\gamma||\gamma')$ is strongly convex w.r.t $\gamma$.
\end{lemma}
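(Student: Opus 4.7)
The plan is to reduce the question to showing that the negative-entropy part of the KL functional has a positive-definite Hessian with eigenvalues bounded below by a positive constant on the feasible set $\pi(\mu,\nu)$. Since $\gamma'$ does not depend on $\gamma$, I would split
\begin{equation*}
KL(\gamma\|\gamma') \;=\; \sum_{i,j} \gamma(i,j)\log\gamma(i,j) \;-\; \sum_{i,j}\gamma(i,j)\log\gamma'(i,j),
\end{equation*}
and observe that the second sum is linear in $\gamma$, hence contributes nothing to the Hessian. Thus strong convexity of $KL(\gamma\|\gamma')$ reduces to strong convexity of the negative-entropy functional $h(\gamma)=\sum_{i,j}\gamma(i,j)\log\gamma(i,j)$ on $\pi(\mu,\nu)$.

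Next I would compute partial derivatives directly. Differentiating term-by-term yields $\partial h/\partial\gamma(i,j)=\log\gamma(i,j)+1$ and
\begin{equation*}
\frac{\partial^{2} h}{\partial\gamma(i,j)\,\partial\gamma(k,l)}
=\begin{cases} 1/\gamma(i,j), & (i,j)=(k,l), \\ 0, & \text{otherwise.}\end{cases}
\end{equation*}
Hence $\nabla^{2}h(\gamma)$, viewed as an operator on the vectorised coupling, is a diagonal matrix with strictly positive entries $1/\gamma(i,j)$. The key observation is that for any $\gamma\in\pi(\mu,\nu)$ the marginal constraints $\gamma\mathbf{1}=\mu$ and $\gamma^{\top}\mathbf{1}=\nu$ together with $\gamma\ge 0$ and $\mu,\nu\in\Sigma_{n_1},\Sigma_{n_2}$ force $0\le\gamma(i,j)\le 1$. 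Therefore every diagonal entry satisfies $1/\gamma(i,j)\ge 1$, and consequently $\nabla^{2}h(\gamma)\succeq \mathbf{I}$ as a quadratic form on $\mathbb{R}^{n_1\times n_2}$. Adding back the linear term (which has zero Hessian) preserves this bound, giving $\nabla^{2}KL(\gamma\|\gamma')\succeq \mathbf{I}$.

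From $\nabla^{2}KL(\gamma\|\gamma')\succeq \mathbf{I}$ on the relative interior of $\pi(\mu,\nu)$, strong convexity with modulus at least $1$ follows by the standard second-order characterisation, which I would record as the final step. The only subtlety worth flagging is that $h$ is only differentiable where $\gamma(i,j)>0$; I would handle this by adopting the usual convention $0\log 0 = 0$ and invoking convexity in the standard extended-real-valued sense, so that the Hessian argument on the open face where all entries are positive extends to the closure by continuity. That boundary handling is the one step that is not a mechanical derivative calculation, but it is routine once one notes that $h$ is the standard Shannon negative entropy whose strong convexity on bounded probability sets is well known.
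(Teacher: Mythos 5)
Your proposal is correct and follows essentially the same route as the paper: both compute the Hessian of the KL term as the diagonal matrix with entries $1/\gamma(i,j)$, use the fact that $\gamma(i,j)\in[0,1]$ on the coupling polytope to conclude $\nabla^{2}KL(\gamma\|\gamma')\succeq \mathbf{I}$, and invoke the second-order characterisation to obtain $1$-strong convexity. Your explicit separation of the linear term in $\log\gamma'$ and your remark about differentiability at the boundary ($0\log 0=0$) are refinements the paper omits, but they do not change the argument.
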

\begin{proof} 
%\A{In our work, $\gamma'$ is a fixed finite distribution. Thus,
We can compute the Hessian of $KL(\gamma||\gamma')$ as follows,
\begin{equation}\label{equ:kl_hessian}
\nabla^2 KL(\gamma||\gamma') = diag \Big(\frac{1}{\gamma(i,j)}\Big),
\end{equation}
where $\gamma(i,j) \in [0,1]$. Since a function $f$ is $\sigma$-\emph{strongly convex} iff there exists a constant $\sigma > 0$ s.t. its Hessian satisfies $\nabla^2 f(\gamma) \succeq \sigma \mathbf{I}$, $\forall \gamma \in$ dom f, where $\mathbf{I}$ refers to an identity matrix,
%\A{Thus, Hessian is positive definite, i.e, for every non-zero column vector $z \neq 0$, we have $z^T (\nabla^2 KL(\gamma||\gamma')) z > 0$.}  
$KL(\gamma||\gamma')$ is $1$-strongly convex because $z^T (\nabla^2 KL(\gamma||\gamma')) z \ge  \sigma||z||^2$ and $\sigma=1$. %However, $KL(\gamma||\gamma')$ is not Lipschitz-smooth, because $\nabla^2 KL(\gamma||\gamma')\rightarrow \infty$ when $\gamma(i,j)\rightarrow 0$, which is unbounded.
\end{proof}
 Although $GW(\mu,\nu)$ remains non-convex, the strong convexity of $KL(\gamma||\gamma')$ enables better optimization convergence (will be discussed further in Section \ref{subsec:gcg}).

\subsection{RW Discrepancy}\label{subsec:gcg}
In the following, we present the \emph{Regularized Wasserstein} (RW) discrepancy to preserve both features and structure of graphs. The main idea is to consider local barycentric and global connectivity Wasserstein distances, as well as Wasserstein distance for features and their local variations, in a unified framework.  We also discuss our optimization technique and analyze the theoretical properties.

Let $\beta_1,\beta_2 \in (0,1]$ and $\mathbf{C}^{V}$ be a feature similarity matrix containing the information of features and their local variations. Formally, the RW discrepancy is defined as follows: 
\begin{equation}\label{equ:graph-gcg-wasserstein}
\begin{split}
RW(\mu,\nu) =  &\underset{\gamma \in \pi(\mu,\nu)}{\text{min}} \hspace{0.15cm}\big<\gamma, \mathbf{C}^{V}\big>_F  \\ &+\beta_1 LW(\mu,\nu) + \beta_2 GW(\mu,\nu). %^{\alpha}
\end{split}
\end{equation}
In a nutshell, the RW discrepancy derives an optimal coupling $\gamma$ by minimizing a linear combination of costs of transporting graph features and their local variations, transporting vertices and transporting edges across two graphs.

%To design a more scalable algorithm than CG, we use a generalized version of the CG algorithm \cite{bredies2009generalized}, known as generalized conditional gradient (GCG), to solve the constrained minimization problems with composite objective functions.

% Although desired from the optimal transport learning viewpoint, unfortunately, the optimization problem for Eq. \ref{equ:graph-gcg-wasserstein} is NP-hard. This is because $GW(\mu, \nu)$ is non-convex and can be cast as a quadratic assignment problem which is known to be NP-hard \cite{alvarez2018gromov,vayer2019sliced}. 

Solving an unregularized Gromov-Wasserstein optimization problem in its full generality is known to be NP-hard \cite{alvarez2018gromov, vayer2019sliced}. The optimization problem for Eq. \ref{equ:graph-gcg-wasserstein} is thus also NP-hard. Therefore, the convergence to the optimality of RW is a non-trivial and difficult problem. Below, we present a solution to tackle this difficult problem.

%Therefore, convergence to the optimality of RJW is notoriously hard. Thus, we tackled the RJW by partial linearizing the non-convex part of the composite function with sinkhorn conditional gradient.

% solving the original optimal transport problem is inherently slow because of its high computational cost. Thus, developing an efficient solution to compute optimal transport distances between large graphs is still a challenging task. 

Firstly, we transform the optimization problem for Eq.~\ref{equ:graph-gcg-wasserstein} into an equivalent problem with the following form of objective:
\begin{equation}\label{equ:gcg-objective}
\underset{\gamma \in \pi(\mu,\nu)}{\text{min}} H(\gamma) =  \underset{\gamma \in \pi(\mu,\nu)}{\text{min}} f(\gamma) + g(\gamma) {-h(\gamma)},
\end{equation}
where we have:
\begin{equation*}\label{equ:gcg-sinkhorn-1}
\begin{split}
f(\gamma) =& \big<\gamma, \mathbf{C}^V\big>_F + \beta_1 LW(\mu,\nu);  \\
g(\gamma) =& \big<\gamma,\beta_2(L_{2}(\mathbf{C}^P_{\mu}, \mathbf{C}^P_{\nu})\otimes \gamma)\big>_F; \\ 
h(\gamma) =& \beta_2(\lambda_g \Theta_g(\gamma)).
\end{split}
\end{equation*}

\begin{algorithm}[ht]
\SetAlgoLined
 initialize i=0, $\gamma^0 \gets \mu \nu^T$, and $c^0 \gets H(\gamma^0)$\\
 \While{$i \le t$}{
  $i \gets i+1$\\
  $\nabla H(\gamma) \gets$ Gradient of $H(\gamma)$ w.r.t $\gamma^{(i-1)}$\\
  $\hat{\gamma}^{(i-1)} \gets$ $\emph{Sinkhorn-knopp}$ ($\mu$, $\nu$, $\nabla H(\gamma)$, $\lambda$, $b$)\\
  %$\hat{\gamma}^{(i-1)} \gets \underset{\gamma \in \pi(\mu,\nu)}{\text{argmin}} \Big<\gamma , \nabla H(\gamma) \Big>_F$ \\
  $\Delta \gamma \gets \hat{\gamma}^{(i-1)} \text{-} \gamma^{(i-1)}$\\
  $\alpha^{(i)}, c^{(i)} \gets $ \emph{Line-search} ($\gamma^{(i-1)}$, $\Delta \gamma$, $\nabla H(\gamma)$, $c^{(i-1)}$) w.r.t. Eq.~\ref{equ:gcg-objective}\\ 
  $\gamma^{(i)} \gets \gamma^{(i-1)} + \alpha^{(i)} \Delta \gamma$\\
  $\delta^{(i-1)} \gets \Big<\Delta \gamma, -\nabla H(\gamma) \Big>_F$\\
  \If{$\delta^{(i-1)} \le \epsilon $}{stop}
 }
 \caption{Training for RW Discrepancy}\label{alg:SCG}
\end{algorithm}

Then, we design a training algorithm for RW discrepancy, namely \emph{Sinkhorn Conditional Gradient} (SCG), based on Conditional Gradient \cite{jaggi2013revisiting}, which is described in Algorithm \ref{alg:SCG}. 
The main idea is to linearize the composite objective function in Eq.~\ref{equ:gcg-objective}, where $t$ is the maximum number of iterations for SCG and $b$ is the maximal number of Sinkhorn iterations. In each iteration, we compute an optimal coupling matrix $\hat{\gamma}^{(i-1)}$ based on the gradient of $H(\gamma)$ by \emph{Sinkhorn-knopp}, where $\lambda\in[0,\infty]$ and obtain the descent direction $\Delta \gamma$. Then, we use \emph{Line-search} to determine the step size $\alpha^{(i)}$ based on the gradient of $H(\gamma)$ along the descent direction $\Delta \gamma$. The algorithm terminates if the suboptimality gap converges under a threshold $\epsilon$, i.e., $\delta^{(i-1)} \le \epsilon$. 

The gradients of $f(\gamma)$ and $g(\gamma)$ are calculated as follows: %and $b$ is the maximal number of Sinkhorn iterations
\vspace{0.05cm}
\begin{equation*}\label{equ:gcg-sinkhorn-derivative}
\begin{split}
\nabla f(\gamma) =& \mathbf{C}^V+\beta_1(\mathbf{C}^{N}) +\\ & \beta_1(\lambda_{\mu} \nabla\Omega_{\mu}(\gamma) + \lambda_{\nu} \nabla\Omega_{\nu}(\gamma)+\rho \gamma);\\
\nabla g(\gamma) =& 2 \beta_2(L_{2}(\mathbf{C}_{\mu}^P, \mathbf{C}_{\nu}^P)\otimes \gamma);\\
\nabla h(\gamma) =& \beta_2(\lambda_g(1+log(\gamma)-log(\gamma'))),
\end{split}
\end{equation*}\vspace{0.1cm}
where
\begin{equation*}\label{equ:partial-derivative-omega-s}
\begin{split}
\nabla\Omega_{\mu}(\gamma) =& \frac{\partial{(\Omega_{\mu}(\gamma))}}{\partial{\gamma}}=\mathbf{L}_{\mu}^T \gamma \mathbf{E}_{\nu} \mathbf{E}_{\nu}^T + \mathbf{L}_{\mu} \gamma \mathbf{E}_{\nu} \mathbf{E}_{\nu}^T;
%\end{equation}\vspace{-0.6cm}
%\begin{equation}\label{equ:partial-derivative-omega-t}
\\\nabla\Omega_{\nu}(\gamma) =& \frac{\partial{(\Omega_{\nu}(\gamma))}}{\partial{\gamma}}=\mathbf{E}_{\mu} \mathbf{E}_{\mu}^T \gamma \mathbf{L}_{\nu}^T + \mathbf{E}_{\mu} \mathbf{E}_{\mu}^T \gamma \mathbf{L}_{\nu}.
\end{split}
\end{equation*}

SCG has nice convergence properties. It is guaranteed to converge to a stationary point. Below, we define suboptimality gap \cite{jaggi2013revisiting} for SCG and present the theoretical results. 

\begin{definition}[Suboptimality gap]\label{d_supoptimality_gap} For each $i$-th iteration of SCG, the suboptimality gap $\delta_i$ is defined by
\begin{equation}\label{equ:supoptimality_gap}
\delta_i = \underset{\hat{\gamma} \in \pi(\mu,\nu)}{\text{max}} \Big<(\gamma - \hat{\gamma}), \nabla H(\gamma) \Big>_F.
\end{equation}
%\q{$\delta_i =0$ if and only if ? at a stationary point.}
\end{definition}

We know that, by Lemma \ref{lem:local} $f(\gamma)$ is $L$-smooth, and by the results of \cite{chapel2020partial} $g(\gamma)$ is also $L$-smooth. Thus, $f(\gamma)+g(\gamma)$ is $L$-smooth. Further, by Lemma \ref{lem:kl}, $h(\gamma)$ is strongly convex. Thus, we obtain a generalized curvature constant $C_{f+g-h} \le (L-\sigma) \cdot diam_{||.||}(\pi(\mu,\nu))^2$ where $0<\sigma<L$. By the results of Frank-Wolfe algorithm for non-convex functions \cite{lacoste2016convergence}, we have $\underset{0 \le i \le k}{\text{min}} \delta_i \le \frac{max\{2h_0, C_{f+g-h}\}}{\sqrt{k+1}}$, for $k \ge 0$. Hence, we obtain the following theorem. 

%\A{Our proof follows the steps of the conditional gradient method \cite{jaggi2013revisiting} and Frank-Wolfe algorithm for non-convex functions \cite{lacoste2016convergence}.Our SCG algorithm is based on the Armijo rule in line searches for selecting the step size $\alpha$ into the descent direction. Further, by Lemma \ref{h-generalized-curvature}, there exists a generalized curvature constant $C_{f+g-h} \le (L-\sigma) \cdot diam_{||.||}(\pi(\mu,\nu))^2$ over the feasible set $\pi(\mu,\nu)$.}
%Thus, we have the following formula by applying $\Bar{\gamma}:= \gamma + \alpha (\hat{\gamma}-\gamma)$ on Eq.~\ref{equ:H_Lipschitz_continuous} in Lemma \ref{h-inequality}

\begin{theorem}[\textbf{Convergence}]\label{t_convergence_SCG} SCG has the minimal suboptimality gap $\delta_i$ that satisfies the following condition:
\begin{equation}\label{equ:GCG_convergence_theory}
\underset{0 \le i \le k}{\text{min}} \delta_i \le \frac{max\{2h_0, (L-\sigma) \cdot diam_{||.||} (\pi (\mu,\nu))^2\}}{\sqrt{k+1}}
\end{equation}
where $\sigma=1$, $h_0=H(\gamma^0) - \underset{\gamma \in \pi(\mu,\nu)}{\text{min}} H(\gamma)$ is the initial suboptimality gap, $L$ is a Lipschitz constant of $\nabla(f+g)(\gamma)$, and $diam_{||.||} (\pi (\mu,\nu))^2$ denotes the $||.||_F$-diameter of the $\pi(\mu,\nu)$. 
\end{theorem}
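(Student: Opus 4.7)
The plan is to assemble the ingredients already laid out just before the theorem statement into a standard Frank–Wolfe-style convergence bound for the composite objective $H(\gamma)=f(\gamma)+g(\gamma)-h(\gamma)$ over the compact convex polytope $\pi(\mu,\nu)$. First, I would note that Algorithm~\ref{alg:SCG} is precisely a conditional gradient scheme on $H$: at iteration $i$ it linearizes $H$ at $\gamma^{(i-1)}$, obtains a feasible minimizer $\hat{\gamma}^{(i-1)}$ of the linear surrogate $\langle\gamma,\nabla H(\gamma^{(i-1)})\rangle_F$ via a Sinkhorn projection onto $\pi(\mu,\nu)$, and then performs a line-search along $\Delta\gamma=\hat{\gamma}^{(i-1)}-\gamma^{(i-1)}$. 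The suboptimality gap in Definition~\ref{d_supoptimality_gap} is thus the standard Frank–Wolfe duality gap $\delta_i=\langle\gamma^{(i)}-\hat{\gamma}^{(i)},\nabla H(\gamma^{(i)})\rangle_F$, which is known to be a valid stationarity measure on a convex feasible set.

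Next I would quantify smoothness. By Lemma~\ref{lem:local}, $f$ is $L_1$-smooth on $\pi(\mu,\nu)$; by the argument of Chapel et al.\ cited in the paragraph above the theorem, the Gromov-type quadratic $g$ is $L_2$-smooth on the same bounded set; consequently $f+g$ is $L$-smooth with $L=L_1+L_2$. On the other side, Lemma~\ref{lem:kl} gives $h=\beta_2\lambda_g\Theta_g$ strongly convex with modulus $\sigma=1$ in the KL sense, so $-h$ is weakly concave. Combining these, for any $\gamma,\gamma'\in\pi(\mu,\nu)$ and $\alpha\in[0,1]$ with $\gamma_\alpha=\gamma+\alpha(\gamma'-\gamma)$, descent-lemma-type inequalities give
\begin{equation*}
H(\gamma_\alpha)\le H(\gamma)+\alpha\langle\nabla H(\gamma),\gamma'-\gamma\rangle_F+\tfrac{\alpha^2}{2}(L-\sigma)\|\gamma'-\gamma\|_F^2.
\end{equation*}
Taking the supremum of $\|\gamma'-\gamma\|_F^2$ over the polytope yields the generalized curvature bound $C_{f+g-h}\le(L-\sigma)\,\mathrm{diam}_{\|\cdot\|}(\pi(\mu,\nu))^2$ announced before the theorem.

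With the quadratic majorant in hand, I would plug into the non-convex Frank–Wolfe analysis of Lacoste-Julien: for the line-search (or any step-size at least as good as the closed-form minimizer of the majorant), one obtains the per-step decrease $H(\gamma^{(i-1)})-H(\gamma^{(i)})\ge\min\!\big\{\tfrac{\delta_{i-1}}{2},\tfrac{\delta_{i-1}^2}{2C_{f+g-h}}\big\}$. Telescoping from $i=0$ to $k$ and using $\sum_{i=0}^{k}(H(\gamma^{(i)})-H(\gamma^{(i+1)}))\le h_0:=H(\gamma^0)-\min_{\gamma}H(\gamma)$, a standard case split on whether $\delta_i\le C_{f+g-h}$ produces the $O(1/\sqrt{k+1})$ rate
\begin{equation*}
\min_{0\le i\le k}\delta_i\le\frac{\max\{2h_0,\,C_{f+g-h}\}}{\sqrt{k+1}},
\end{equation*}
and substituting the curvature bound above gives exactly the statement of the theorem with $\sigma=1$ as supplied by Lemma~\ref{lem:kl}.

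The main obstacle I anticipate is the curvature step: one has to make sure that subtracting a strongly convex $h$ genuinely reduces the effective smoothness constant of $H$ from $L$ to $L-\sigma$, rather than just leaving it at $L$. This requires showing that the descent-lemma upper bound for $f+g$ and the strong-convexity lower bound $h(\gamma_\alpha)\ge h(\gamma)+\alpha\langle\nabla h(\gamma),\gamma'-\gamma\rangle_F+\tfrac{\sigma\alpha^2}{2}\|\gamma'-\gamma\|_F^2$ combine cleanly when forming $H=(f+g)-h$, and that the line-search in Algorithm~\ref{alg:SCG} is strong enough to realize the decrease predicted by this combined majorant. Everything else — the duality-gap interpretation, the telescoping, and the two-regime min — is routine once smoothness, the curvature constant, and the initial gap $h_0$ are in place.
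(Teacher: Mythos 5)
Your proposal follows essentially the same route as the paper: establish $L$-smoothness of $f+g$ (Lemma~\ref{lem:local} plus the cited result for $g$), use the $\sigma$-strong convexity of $h$ from Lemma~\ref{lem:kl} to tighten the generalized curvature constant to $C_{f+g-h}\le (L-\sigma)\,\mathrm{diam}_{\|\cdot\|}(\pi(\mu,\nu))^2$, and invoke the non-convex Frank--Wolfe rate of Lacoste-Julien to get $\min_{0\le i\le k}\delta_i\le \max\{2h_0,C_{f+g-h}\}/\sqrt{k+1}$. The only difference is that you unfold the telescoping/per-step-decrease argument that the paper simply cites, and the curvature step you flag as the main obstacle is exactly the combination of the descent lemma for $f+g$ with the strong-convexity lower bound for $h$ that the paper uses.
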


Following Theorem \ref{t_convergence_SCG}, we have the corollary below.

\begin{corollary}
For SCG, the minimal suboptimality gap is $O(\frac{1}{\sqrt{k}})$ after the number $k$ of iterations. It takes at most $O(\frac{1}{\epsilon^2})$ iterations to find an approximate stationary point with a suboptimality gap smaller than $\epsilon$.
\end{corollary}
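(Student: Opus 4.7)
The plan is to derive both claims as direct consequences of Theorem~\ref{t_convergence_SCG}; no new structural argument is needed beyond one asymptotic reading of the bound and one inversion.

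First I would read off the convergence rate. In the inequality
\[
\min_{0 \le i \le k} \delta_i \;\le\; \frac{\max\{2h_0,\,(L-\sigma)\cdot \mathrm{diam}_{\|\cdot\|}(\pi(\mu,\nu))^2\}}{\sqrt{k+1}},
\]
the entire numerator is a constant $C$ that does not depend on the iteration index. Hence $\min_{0 \le i \le k}\delta_i = O(C/\sqrt{k})$, which establishes the first half of the corollary.

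Next, to obtain the iteration-complexity statement, I would simply invert the bound: to ensure that the minimal suboptimality gap drops below $\epsilon$, it suffices to choose $k$ large enough that $C/\sqrt{k+1}\le \epsilon$, equivalently $k \ge C^2/\epsilon^2 - 1$. Taking the smallest such integer yields $k = O(1/\epsilon^2)$ and gives the second half.

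The only point requiring care is confirming that the constant $C$ is finite. This follows from three ingredients already in the excerpt: the feasible set $\pi(\mu,\nu)$ is a bounded polytope, so its $\|\cdot\|_F$-diameter is finite; Lemma~\ref{lem:local} together with the smoothness of $g(\gamma)$ provides a finite Lipschitz constant $L$ for $\nabla(f+g)$, while Lemma~\ref{lem:kl} gives $\sigma = 1 < L$; and $h_0 = H(\gamma^0) - \min_{\gamma \in \pi(\mu,\nu)} H(\gamma)$ is finite because $H$ is continuous on the compact set $\pi(\mu,\nu)$. I therefore expect no real obstacle: the corollary is essentially a notational rephrasing of the theorem and amounts to observing that an $O(1/\sqrt{k})$ rate inverts to an $O(1/\epsilon^2)$ iteration complexity.
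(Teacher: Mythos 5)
Your argument is exactly the one the paper intends: the corollary is stated as an immediate consequence of Theorem~\ref{t_convergence_SCG}, obtained by treating the numerator as a constant to read off the $O(1/\sqrt{k})$ rate and then inverting $C/\sqrt{k+1}\le\epsilon$ to get $O(1/\epsilon^2)$ iterations. Your additional check that the constant is finite (compactness of $\pi(\mu,\nu)$, finiteness of $L$ and $h_0$) is a sensible touch the paper leaves implicit; there is no gap.
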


\subsection{Regularized Wasserstein Kernels (RWK)}\label{subsec:graph_kernels}

We introduce a new graph kernel, namely \emph{Regularized Wasserstein Kernel} (RWK), based on our RW discrepancy presented in Section \ref{subsec:gcg}. Given a set of graphs $\mathcal{G}$, RWK has a \emph{kernel matrix}  $\mathbf{K}\in \mathbb{R}^{|\mathcal{G}| \times |\mathcal{G}|}$ defined as
\begin{equation*}\label{equ:RJW_kernel}
\mathbf{K}_{\mu \nu} = e^{- \eta RW(\mu,\nu)},
\end{equation*}
where $\eta>0$ is a parameter, $\mu$ and $\nu$ correspond to any two graphs in $\mathcal{G}$, and $RW(\mu,\nu)$ is the RW discrepancy between $\mu$ and $\nu$ as defined in Eq.~\ref{equ:graph-gcg-wasserstein}.

Here, $\mathbf{K}$ is an indefinite kernel matrix. Following SVM with indefinite kernels introduced by Luss and d'Aspremont \cite{luss2008support}, we treat $\mathbf{K}$ as the noisy observation of a true positive semi-definite kernel (i.e., a proxy kernel). Thus, our graph classification problem with an indefinite RWK can be expressed as a robust classification problem under a perturbation of the true positive semidefinite kernel. This formulation allows us to learn support vector weights and a proxy kernel simultaneously, while penalizing the distance between the indefinite RWK and the proxy kernel in the same way as studied in \cite{luss2008support}. 
%Thus, we can reformulated graph classification on indefinite RJW kernels as a noisy observation of the true positive semidefinite kernel.
%Thus, graph classification becomes more robust and a convex problem which can efficient and guarantee the results

%Thus, this regularized approximation is guaranteed for computing a stationary point of a non-convex function.
%Our algorithm closely relates to the work in \cite{harchaoui2015conditional}. However, they use a non-differentiable norm for regularization, whereas our algorithm uses a differentiable convex function $g(.)$. Therefore, when $\beta_1 > 0$ and $\beta_2 > 0$, our algorithm can guarantee convergence due to the strict convexity of $g(.)$ \cite{bredies2009generalized}. 
%In our method, sinkhorn fixed point iterative approach guarantees a better convergence w.r.t our joint objective.

%\A{We can also see, when $\beta_1 \to 0$ and increasing the regularization term $\lambda_g \to \infty$ in Eq.~\ref{equ:graph-gcg-wasserstein}, sparsity of optimal coupling $\hat{\gamma}$ get decrease and the optimal transport coupling converges toward $\hat{\gamma}_{i,j} \to 1/(n_s n_t)$, $\forall{i,j}$ \cite{cuturi2013sinkhorn}.}
%\end{remark}

\subsection{Computational Complexity} A naive implementation of $\nabla f(\gamma)$ has the time complexity $O(N^4)$ due to the tensor-matrix multiplication in $GW(\mu,\nu)$, where $N=max\{n_1,n_2\}$. Nevertheless, as discussed in \cite{peyre2016gromov}, for a general class of loss functions, the tensor-matrix multiplication can be decomposed into matrix-matrix multiplications and the time complexity of $\nabla f(\gamma)$ can thus be reduced to $O(N^3)$. $\nabla g(\gamma)$ has the time complexity $O(N^3)$. The time complexity of the line search algorithm depends on the computation of $H(\gamma)$ in Eq. \ref{equ:gcg-objective}. Since it has the time complexity $O(N^3 + N^2 k^2)$, the total time complexity of our algorithm is $O(t(N^3 + N^2 k^2))$, where $t$ refers to the total number of iterations and $k$ is the dimension of the node embedding.
The memory complexity of our algorithm is $O(N^2)$.

Table \ref{Tab:complexity} summarizes the time and memeory complexity of several optimal transport based graph kernels. Note that $t$ is much smaller than $N$ in practice.

\begin{table}[h!]
\centering\vspace{-0cm}
\scalebox{1}{\begin{tabular}{| c| c| c|}  \hline
  \multirow{2}{*}{} Optimal Transport Based & Time & Memory \\ Graph Kernel& Complexity &Complexity\\
  \hline
      {WL-PM} \cite{nikolentzos2017matching}& $O(N^3 log(N))$ & $O(N^2)$\\\hline
    WWL \cite{togninalli2019wasserstein} & $O(N^3 log(N))$ & $O(N^2)$ \\\hline
    FGW \cite{titouan2019optimal}&  $O (t(N^3))$ & $O(N^2)$ \\\hline
    {RWK} (ours) &  $O(t(N^3 + N^2 k^2))$ &  $O(N^2)$\\
  \hline
 \end{tabular}}
 \caption{A summary of time and memory complexities. \label{Tab:complexity}}\vspace{-0.3cm}
\end{table}

%\vspace{0.2cm}
\section{Numerical Experiments}
\label{sec:experiments}
We evaluate regularized wasserstein kernels (RWKs) on graph classification benchmark tasks against the state-of-the-art baselines in order to answer the following questions: \vspace{0.05cm}
\begin{description}
    \item[\textbf{Q1.}] How well can RWK empirically perform for graph classification tasks?
    \item[\textbf{Q2.}] What impact do feature local variations have on the performance of RWK?
    \item[\textbf{Q3.}] How does each of the key components in RWK (i.e., different distance metrics and regularization terms) contribute to the overall performance of RWK?
    \item[\textbf{Q4.}] How efficiently can RWK perform in comparison with the existing optimal-transport based graph kernels?
\end{description} \vspace{0.05cm}
Below, we will present our experimental environment. Then, we will discuss the experimental results and answer these questions in Section \ref{sec:results}.

\subsection{Datasets}
In our experiments, we consider 12 benchmark datasets, which generally fall into two categories: 

\smallskip
\noindent\textbf{(1) Graphs with discrete attributes:} MUTAG, PTC-MR, NCI1, NCI109 and D\&D are bioinformatics datasets \cite{debnath1991structure, xu2018powerful, kriege2016valid, shervashidze2011weisfeiler}, and COLLAB is a social network \cite{yanardag2015deep} for which we use the same one-hot encoding setup as in \cite{xu2018powerful}. 
%debnath1991structure, wale2008comparison

\smallskip
\noindent\textbf{(2) Graphs with continuous attributes:} COX2, COX2-MD, BZR, BZR-MD, PROTEINS and ENZYMES are bioinformatics datasets \cite{sutherland2003spline,borgwardt2005shortest,togninalli2019wasserstein}. 
%sutherland2003spline,borgwardt2005shortest

Table \ref{Tab:datasets} provides further details about these datasets, including the availability of node and edge attributes, the number of graphs, and the number of classes. 
\begin{table}[h!]
\centering 
\scalebox{1}{\begin{tabular}{l c  c c c} % centered columns (4 columns)
\specialrule{.1em}{.05em}{.05em} 
\multirow{2}{*}{Dataset} &  Node & Edge & \multirow{2}{*}{\#Classes} &  \multirow{2}{*}{\#Graphs} \\  
& Attributes& Attributes& & \\[0.5ex] 
\hline 
MUTAG  & \checkmark &  - & 2 & 188\\
PTC-MR  & \checkmark &  - & 2 & 344 \\
NCI1  & \checkmark &  - & 2 & 4110 \\
D \& D  & \checkmark &  - & 2 & 1178\\
NCI109  & \checkmark &  - & 2 & 4127\\
COLLAB  & \checkmark &  - & 3 & 5000\\ \hline
%continuous attributed graphs
ENZYMES  & \checkmark &  \checkmark & 6 & 600\\
PROTEINS  & \checkmark &  \checkmark & 2 & 1113 \\
COX2  & \checkmark &  \checkmark & 2 & 467\\
BZR  & \checkmark &  \checkmark & 2 & 405\\
COX2-MD  & \checkmark &  - & 2 & 303\\
BZR-MD  & \checkmark &  - & 2 & 306\\
\specialrule{.1em}{.05em}{.05em}
\end{tabular}}
\caption{Dataset statistics. \label{Tab:datasets}}
\end{table}

\begin{table*}
\centering 
\scalebox{1}{\begin{tabular}{r l c c c c c c}
%\specialrule{.1em}{.05em}{.05em} 
& Method & MUTAG & PTC-MR & NCI1 & D\&D & NCI109 & COLLAB \\ [0.5ex] 
\hline
\multirow{5}{*}{\parbox{3cm}{Non-OT graph kernels}}
& WL & 90.4 $\pm$ 5.7 & 59.9 $\pm$ 4.3 & 86.0 $\pm$ 1.8 & 79.4 $\pm$ 0.3 & 85.9 $\pm$ 1.5 & 78.9 $\pm$ 1.9 \\
& WL-OA & 84.5 $\pm$ 1.7 & 63.6 $\pm$ 1.5 & 86.1 $\pm$ 0.2 & 79.2 $\pm$ 0.4 & 86.3 $\pm$ 0.2 & 80.7 $\pm$ 0.1 \\
& RetGK & 90.3 $\pm$ 1.1 & 62.5 $\pm$ 1.6 & 84.5 $\pm$ 0.2 & - & - & 81.0 $\pm$ 0.3 \\
& GNTK & 90.0 $\pm$ 8.5 & 67.9 $\pm$ 6.9 & 84.2 $\pm$ 1.5 & 75.6 $\pm$ 3.9 & - & 83.6 $\pm$ 1.0 \\
& P-WL & 90.5 $\pm$ 1.3 & 64.0 $\pm$ 0.8 & 85.4 $\pm$ 0.1 & 78.6 $\pm$ 0.3 & 84.9 $\pm$ 0.3 & - \\[0.5ex]\hline
\multirow{3}{*}{\parbox{3cm}{OT-based graph kernels}}& WL-PM & 87.7 $\pm$ 0.8 & 61.4 $\pm$ 0.8 & 86.4 $\pm$ 0.2 & 78.6 $\pm$ 0.2 & 85.3 $\pm$ 0.2 & 81.5 $\pm$  0.5 \\
& WWL & 87.2 $\pm$ 1.5 & 66.3 $\pm$ 1.2 & 85.7 $\pm$ 0.2 & 79.6 $\pm$ 0.5 & - & - \\
& FGW & 88.4 $\pm$ 5.6 & 65.3 $\pm$ 7.9 & 86.4 $\pm$ 1.6 & - & - & - \\[0.5ex]\hline
\multirow{4}{*}{\parbox{3cm}{GNN-based methods}} & PATCHY-SAN & 92.6 $\pm$ 4.2 & 60.0 $\pm$ 4.8 & 78.6 $\pm$ 1.9 & 77.1 $\pm$ 2.4 & - & 72.6 $\pm$ 2.2 \\ 
& DGCNN & 85.8 $\pm$ 0.0 & 58.6 $\pm$ 0.0 & 74.4 $\pm$ 0.0 & 76.6 $\pm$ 0.0 & 75.0 $\pm$ 0.0 & 73.7 $\pm$ 0.0 \\
& CapsGNN & 86.6 $\pm$ 1.5 & 66.0 $\pm$ 1.8 & 78.3 $\pm$ 1.3 & 75.3 $\pm$ 2.3 & 81.1 $\pm$ 3.1 & 79.6 $\pm$ 2.9 \\
& GIN & 89.4 $\pm$ 5.6 & 64.6 $\pm$ 7.0 & 82.7 $\pm$ 1.7 & 75.3 $\pm$ 3.5 & 86.5 $\pm$ 1.5 & 80.2 $\pm$ 1.9 \\[0.5ex]\hline
\multirow{3}{*}{\parbox{2cm}{Our work}}& RWK & \textbf{93.6} $\pm$ \textbf{3.7} & \textbf{69.5} $\pm$ \textbf{6.1} & \textbf{88.0} $\pm$ \textbf{4.5} & \textbf{81.6} $\pm$ \textbf{3.5} & \textbf{87.3} $\pm$ \textbf{6.1} & \textbf{83.8} $\pm$ \textbf{4.6}\\
& RWK-1 & 92.5 $\pm$ 3.1 & 68.9 $\pm$ 5.1 & 87.7 $\pm$ 6.1 & 81.0 $\pm$ 4.3 & 86.9 $\pm$ 5.2 & 83.2 $\pm$ 3.1\\
& RWK-0 & 90.7 $\pm$ 4.2 & 67.8 $\pm$ 3.6 & 87.0 $\pm$ 5.1 & 79.6 $\pm$ 3.1 & 86.4 $\pm$ 4.6 & 81.5 $\pm$ 3.9\\
\specialrule{.1em}{.05em}{.05em}
\end{tabular}}\caption{Classification accuracy (\%) averaged over 10 runs on graphs with discrete attributes. The results of WL and RetGK are taken from \cite{du2019graph} and the results of the other baselines are from their original papers.} 
\label{Tab:discrete-attributes-baselines} 
\end{table*}\vspace*{-0cm}
\begin{table*}[ht]
\centering 
\scalebox{1}{\begin{tabular}{r l c c c c c c c}
%\specialrule{.1em}{.05em}{.05em} 
& Method & COX2 & ENZYMES & PROTEINS & BZR & COX2-MD & BZR-MD \\ [0.5ex] 
\hline
\multirow{4}{*}{\parbox{3cm}{Non-OT graph kernels}} & GHK & 76.4 $\pm$ 1.3 & 65.6 $\pm$ 0.8 & 74.7 $\pm$ 0.2 & 76.4 $\pm$ 0.9 & 66.2 $\pm$ 1.0 & 69.1 $\pm$ 2.0 \\
& PK & 77.6 $\pm$ 0.6 & 71.6 $\pm$ 0.5 & 61.3 $\pm$ 0.8 & 79.5 $\pm$ 0.4 & - & - \\
& HGK-WL & 78.1 $\pm$ 0.4 & 63.0 $\pm$ 0.6 & 75.9 $\pm$ 0.1 & 78.5 $\pm$ 0.6 & 74.6 $\pm$ 1.7 & 68.9 $\pm$ 0.6 \\
& HGK-SP& 72.5 $\pm$ 1.1 & 66.3 $\pm$ 0.3 & 75.7 $\pm$ 0.1 & 76.4 $\pm$ 0.7 & 68.5 $\pm$ 1.0 & 66.1 $\pm$ 1.0\\[0.5ex]\hline
\multirow{2}{*}{\parbox{3cm}{OT-based graph kernels}}& WWL & 78.2 $\pm$ 0.4 & 73.2 $\pm$ 0.8 & 77.9 $\pm$ 0.8 & 84.4 $\pm$ 2.0 & 76.3 $\pm$ 1.0 & 69.7 $\pm$ 0.9 \\
& FGW & 77.2 $\pm$ 4.8 & 71.0 $\pm$ 6.7 & 74.5 $\pm$ 2.7 & 85.1 $\pm$ 4.1 & - & - \\\hline
\multirow{3}{*}{\parbox{2cm}{Our work}}&RWK & \textbf{81.2} $\pm$ \textbf{5.3} & \textbf{78.3} $\pm$ \textbf{4.1} & \textbf{79.3} $\pm$ \textbf{6.1} & \textbf{86.2} $\pm$ \textbf{5.6} & \textbf{78.1} $\pm$ \textbf{4.3} & \textbf{71.9} $\pm$ \textbf{4.6}\\
&RWK-1 & 80.7 $\pm$ 4.6 & 77.5 $\pm$ 5.3 & 78.9 $\pm$ 4.5 & 85.8 $\pm$ 5.5 & 77.4 $\pm$ 3.7 & 71.3 $\pm$ 4.3\\
& RWK-0 & 79.6 $\pm$ 3.1 & 76.4 $\pm$ 4.5 & 78.2 $\pm$ 5.6 & 85.2 $\pm$ 4.3 & 76.7 $\pm$ 5.5 & 70.5 $\pm$ 3.7\\
\specialrule{.1em}{.05em}{.05em}
\end{tabular}}\caption{Classification accuracy (\%) averaged over 10 runs on graphs with continuous attributes. The results of GHK, HGK-WL and HGK-SP are taken from \cite{togninalli2019wasserstein} and the results of the other baselines are from their original papers.} 
\label{Tab:vector-attributes-baselines} \vspace*{-0.2cm}
\end{table*}

\subsection{Baseline Methods}We evaluate the performance of RWK against the following 16 state-of-the-art baselines, divided into three groups: \vspace{0.05cm} 
\begin{itemize}
\item[--] \textbf{Non-OT graph kernels:}
%Shortest Path Kernel (SPK) \cite{borgwardt2005shortest},
WL subtree kernel (WL) \cite{shervashidze2011weisfeiler}, WL Optimal Assignment Kernel (WL-OA) \cite{kriege2016valid}, Graph Hopper Kernel (GHK) \cite{feragen2013scalable}, Propagation Kernel (PK) \cite{neumann2016propagation}, Hash Graph Kernel (HGK-WL; HGK-SP) \cite{morris2016faster}, Return Probabilities of Random Walks Kernel (RetGK) \cite{zhang2018retgk}, Graph Neural Tangent Kernel (GNTK) \cite{du2019graph}, and Persistent WL Kernel (P-WL) \cite{rieck2019persistent}. 
\item[--] \textbf{OT-based graph kernels:} WL Pyramid Match Kernel (WL-PM) \cite{nikolentzos2017matching}, Wasserstein WL Graph Kernel (WWL) \cite{togninalli2019wasserstein} and Fused-Gromov Wasserstein (FGW) \cite{titouan2019optimal}. 
\item[--] \textbf{Graph Neural Network methods:} %We consider the following five GNN-based baselines: 
PATCHY-SAN  \cite{niepert2016learning}, 
Deep Graph Convolutional Neural Network (DGCNN) \cite{zhang2018end}, %GraphSAGE \cite{hamilton2017inductive}, 
Capsule Neural Network (CapsGNN) \cite{xinyi2018capsule}, and Graph Isomorphism Network (GIN) \cite{xu2018powerful}. 
\end{itemize}

%\vspace{0.1cm}
%\noindent\textbf{Experimental setup.~}
\subsection{Experimental Setup}
To benchmark the baseline methods, we follow the work of Titouan et al. \cite{titouan2019optimal} and use the same setup and data splits. The hyperparameters of our method are selected using the nested cross validation \cite{titouan2019optimal}. $C$-SVM classifier is used with $C \in \{10^{-5}, 10^{-4},$ $\dots, 10^{5}\}$. We choose the following parameter ranges: $\eta \in \{2^{-5},2^{-4},\dots, $ $2^{5}\}$, $\beta_1,\beta_2\in\{0.1, 0.2,\dots ,1\}$, $\lambda_{\mu},\lambda_{\nu},\lambda_g, \rho\in\{10^{-1}, 10^{-2}, \dots, 10^{-5}\}$, $t \in \{5, 10\}$, $b \in \{10, 20, \dots, 50\}$,  $\lambda \in \{0.1, 0.2, \dots, 0.9\}$, $\epsilon \in \{10^{-3},$ $ 10^{-4}, \dots, 10^{-9}\}$, and set $\alpha^{(0)}=0.99$ as the initial value of step size. For graphs with discrete attributes, we define feature similarity matrices on the
Weisfeiler-Lehman sequence of graphs \cite{weisfeiler1968reduction}. For BZR-MD and COX2-MD, we follow the same approach in \cite{togninalli2019wasserstein} to obtain node attributes. We consider the number of Weisfeiler-Lehman iterations $h\in\{1, 2\}$.

We choose the $l_2$ distance for $d_f$ and hamming distance for $d_s$. For the dimension of node embeddings, we set $k=64$. For feature local variation, we set $j=2$ (2-hop) as the default setting for RWK. The number and length of random walks are selected from $\{10,20,30\}$ and $\{2, 3, 4, 5, 6, 7, 8\}$, respectively. We train the model of node embeddings using 200 epochs and select the best learning rate from $\{10^{-4}, 10^{-3}, 10^{-2}, 10^{-1}\}$.

\vspace{0.2cm}
\section{Results and Discussion}\label{sec:results}
In this section, we discuss the experimental results to answer the aforementioned four questions.

\subsection{Graph Classification}
We first benchmark the performance of our RWK method against the baselines. The results are reported in Tables \ref{Tab:discrete-attributes-baselines}-\ref{Tab:vector-attributes-baselines}.

We see that, in Table \ref{Tab:discrete-attributes-baselines}, compared with the non-OT graph kernels, RWK improves upon their best results by a margin ranging from 0.2\% to 3.1\% on all datasets. Similarly, RWK improves upon the best results of the OT-based graph kernels by a margin ranging from 1.6\% to 5.2\% on all datasets, and upon the best results of the GNN-based baselines by a margin ranging from 0.8\% to 5.3\%.

In Table \ref{Tab:vector-attributes-baselines}, RWK also consistently performs better than all the baselines on all graphs with continuous attributes. Specifically, RWK improves upon the best results of the non-OT graph kernels by a margin ranging from 2.8\% to 6.7\% and the best results of the OT-based graph kernels by a margin ranging from 1.1\% to 5.1\% across the datasets.

It is worthy to mention that none of the baselines have achieved the best performance on all datasets, in comparison with the other baselines. However, in contrast, RWK consistently performs best on all datasets. Specifically, RWK improves upon the best results of the baselines by a margin of 1.0\% (PATCHY-SAN), 1.6\% (GNTK), 1.6\% (FGW), 2.0\% (WWL), 0.8\% (GIN), and 0.2\% (GNTK) on the datasets MUTAG, PTC-MR, NCI1, D\&D, NCI109 and COLLAB, respectively. A similar situation exists for graphs with continuous attributes.

\subsection{Impact of Local Variations}
To analyze the impact of feature local variations, we compare the performance of RWK that uses 2-hop feature local variations against the following two additional settings: \vspace{0.05cm}
\begin{itemize}
    \item[--] \textbf{RWK-0:} without using any feature local variations; 
    \item[--] \textbf{RWK-1:} with using 1-hop feature local variations.
\end{itemize}\vspace{0.05cm}

The results for this experiment are presented in Tables \ref{Tab:discrete-attributes-baselines}-\ref{Tab:vector-attributes-baselines}. We can see the following. First, feature local variations help further improve the performance considerably and consistently on all datasets, including both graphs with discrete attributes and graphs with continuous attributes. Second, on all these datasets, RWK consistently performs better than RWK-1, and RWK-1 consistently performs better than RWK-0. 

Nonetheless, in our experiments, we also notice that increasing the number of hops does not necessarily lead to improved performance due to the issue of oversmoothing. We thus restrict feature local variations within 2 hops.

\begin{table*}[ht]
\centering 
\scalebox{1}{\begin{tabular}{l c c c c c c c}
\specialrule{.1em}{.05em}{.05em} 
Variants \hspace*{1.5cm}&\hspace*{0.3cm} MUTAG\hspace*{0.3cm} & \hspace*{0.3cm}PTC-MR\hspace*{0.3cm} & \hspace*{0.3cm}NCI1\hspace*{0.3cm} & \hspace*{0.3cm}D\&D\hspace*{0.3cm} & \hspace*{0.3cm}NCI109\hspace*{0.3cm} & \hspace*{0.3cm}COLLAB\hspace*{0.3cm} \\ [0.5ex]
\hline
%\parbox{3cm}{NoLaplacianReg} & 89.3 $\pm$ 4.3 & 66.5 $\pm$ 3.6 & 85.8 $\pm$ 5.1 & 78.5 $\pm$ 5.5 & 85.6 $\pm$ 5.3 & 80.1 $\pm$ 3.6 \\
{NoLaplacianReg}  & 90.1 $\pm$ 3.5 & 67.0 $\pm$ 3.7 & 86.2 $\pm$ 5.3 & 79.4 $\pm$ 4.5 & 85.8 $\pm$ 5.2 & 81.5 $\pm$ 3.9 \\
%\hline
%\multirow{2}{*}{\parbox{3cm}{NoEntropicReg}}& RJW-NL & 89.9 $\pm$ 5.1 & 66.8 $\pm$ 4.6 & 86.5 $\pm$ 5.3 & 78.8 $\pm$ 4.3 & 86.2 $\pm$ 5.3 & 81.4 $\pm$ 3.5 \\
{NoEntropyReg} & {92.2} $\pm$ {3.5} & {68.3} $\pm$ {6.5} & {87.3} $\pm$ {6.1} & {80.4} $\pm$ {3.6} & {86.5} $\pm$ {4.7} & {82.4} $\pm$ {3.8}\\
{NoRegs} & 88.9 $\pm$ 3.5 & 66.2 $\pm$ 4.6 & 85.3 $\pm$ 5.8 & 78.2 $\pm$ 3.9 & 84.7 $\pm$ 5.1 & 80.8 $\pm$ 4.1 \\ \hline
RWK-LW & 87.4 $\pm$ 4.2 & 64.8 $\pm$ 6.5 & 84.9 $\pm$ 3.6 & 77.8 $\pm$ 3.8 & 83.8 $\pm$ 5.7 & 79.5 $\pm$ 3.6 \\ 
RWK-GW & 82.8 $\pm$ 5.4 & 61.2 $\pm$ 5.8 & 81.9 $\pm$ 4.3 & 75.3 $\pm$ 4.8 & 80.7 $\pm$ 5.5 & 75.1 $\pm$ 3.9\\
\specialrule{.1em}{.05em}{.05em}
\end{tabular}}\caption{Classification accuracy (\%) averaged over 10 runs on graphs with discrete attributes.} 
\label{Tab:discrete-attributes-ablation} 
\centering 
\scalebox{1}{\begin{tabular}{l c c c c c c}
\specialrule{.1em}{.05em}{.05em} 
Variants\hspace*{1.5cm} &\hspace*{0.3cm}COX2\hspace*{0.6cm} & \hspace*{0.6cm}BZR\hspace*{0.4cm} & \hspace*{0.1cm}ENZYMES\hspace*{0.1cm} & \hspace*{0cm}PROTEINS\hspace*{0cm} & \hspace*{0.1cm}COX2-MD\hspace*{0.1cm} & \hspace*{0.1cm}BZR-MD\hspace*{0.1cm} \\ [0.5ex] 
\hline
%\multirow{2}{*}{\parbox{3cm}{NoLaplacianReg}}& RJW-NL & 78.4 $\pm$ 3.6 & 84.8 $\pm$ 5.1 & 75.7 $\pm$ 3.4 & 77.3 $\pm$ 6.1 & 75.6 $\pm$ 4.3 & 68.6 $\pm$ 3.6\\
NoLaplacianReg & 79.1 $\pm$ 3.9 & 84.8 $\pm$ 4.2 & 76.2 $\pm$ 3.8 & 77.5 $\pm$ 5.5 & 76.1 $\pm$ 4.6 & 68.7 $\pm$ 3.9\\
%\multirow{2}{*}{\parbox{3cm}{NoEntropicReg}}& RJW-NL & 79.6 $\pm$ 4.1 & 85.1 $\pm$ 6.0 & 75.9 $\pm$ 3.3 & 78.1 $\pm$ 5.3 & 76.7 $\pm$ 3.3 & 69.0 $\pm$ 5.6 \\
{NoEntropyReg} & {80.5} $\pm$ {5.4} & {85.7} $\pm$ {6.3} & {77.2} $\pm$ {3.7} & {78.5} $\pm$ {5.1} & 77.2 $\pm$ 4.1 & 69.8 $\pm$ 4.9\\
{NoRegs} & 78.2 $\pm$ 4.6 & 83.7 $\pm$ 5.6 & 75.4 $\pm$ 3.6 & 76.6 $\pm$ 4.8 & 75.9 $\pm$ 3.6 & 67.9 $\pm$ 4.5 \\ \hline
RWK-LW & 77.1 $\pm$ 4.1 & 82.8 $\pm$ 3.8 & 74.5 $\pm$ 5.2 & 75.5 $\pm$ 4.4 & 74.7 $\pm$ 4.3 & 66.8 $\pm$ 5.1\\
RWK-GW & 75.3 $\pm$ 5.4 & 79.6 $\pm$ 6.0 & 72.6 $\pm$ 3.3 & 73.2 $\pm$ 5.6 & 71.3 $\pm$ 4.1 & 64.1 $\pm$ 3.6\\
\specialrule{.1em}{.05em}{.05em}
\end{tabular}}\caption{Classification accuracy (\%) averaged over 10 runs on graphs with continuous attributes.} 
\label{Tab:continuous-attributes-ablation} \vspace*{-0.3cm}
\end{table*}
\begin{figure*}[ht!] 
 \centering
    \includegraphics[width=1\textwidth]{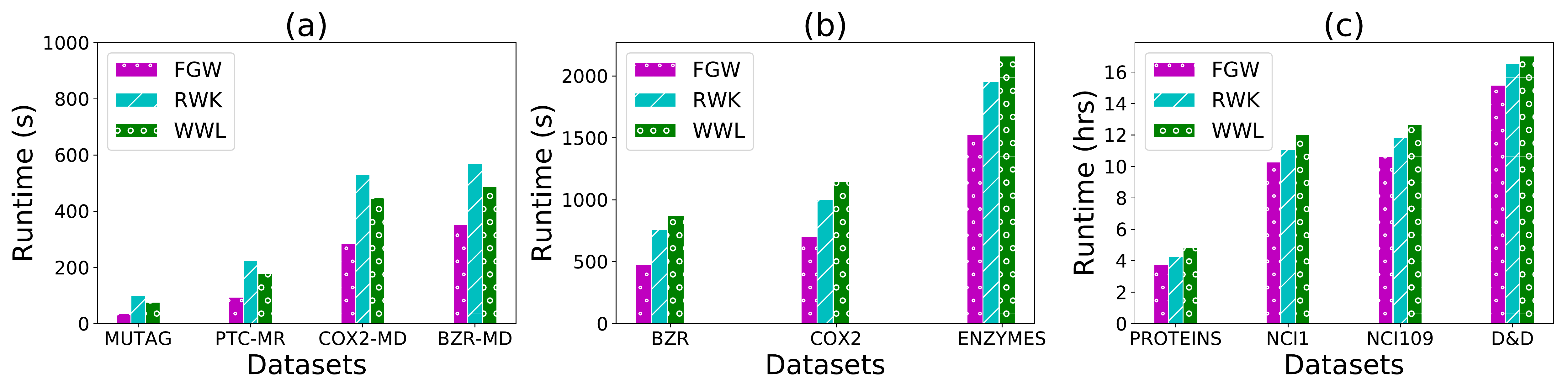}
    \caption{Running time averaged over 10 runs on graphs with discrete and continuous attributes. There are no result for the COLLAB dataset because all methods take more than 24 hours to obtain the results 
    }\label{fig:datasets_runtime_analysis}\vspace{-0.2cm}
\end{figure*}

\subsection{Ablation Analysis}
To demonstrate the effectiveness
of each component in the proposed method RWK, we conduct an ablation study on the following variants: \vspace{0.1cm}
\begin{itemize}
    \item \textbf{NoLaplacianReg:} This variant removes only the Laplacian regularization term $\Theta_w(\gamma)$ from RWK; 
    \item \textbf{NoEntropyReg:} This variant removes only the degree-entropy regularization term $\Theta_g(\gamma)$ from RWK; 
    \item \textbf{NoRegs:} This variant removes both regularization terms $\Theta_w(\gamma)$ and $\Theta_g(\gamma)$ from RWK; 
    \item \textbf{RWK-LW:} This variant removes only the global connectivity Wasserstein distance $GW(\mu,\nu)$ from RWK; 
    \item \textbf{RWK-GW:} This variant removes only the local barycentric Wasserstein distance $LW(\mu,\nu)$ from RWK. 
\end{itemize}\vspace{0.1cm}

The results are presented in Tables \ref{Tab:discrete-attributes-ablation}-\ref{Tab:continuous-attributes-ablation}. We observe that both local barycentric Wasserstein distance and global connectivity Wasserstein distance are crucial to the performance. The regularization terms $\Theta_g(\gamma)$ and $\Theta_w(\gamma)$ help reduce the performance variance while boosting the performance. Specifically, on graphs with discrete attributes, compared with RWK, the performance decreases by a margin ranging from 1.5\% to 3.5\% in NoLaplacianReg,  from 0.7\% to 1.4\% in NoEntropyReg, and from 2.6\% to 4.7\% in NoRegs. A similar trend exists on graphs with continuous attributes, where the performance decreases by a margin ranging from 1.4\% to 3.2\% in NoLaplacianReg, from 0.5\% to 2.1\% in NoEntropyReg, and from 2.2\% to 4.0\% in NoRegs. 
For RWK-LW, compared with RWK, the performance decreases by a margin ranging from 3.1\% to 6.2\% on graphs with discrete attributes and from 3.4\% to 5.1\% on graphs with continuous attributes. Similarly, for RWK-GW, the performance decreases by a margin ranging from 6.1\% to 10.8\% on graphs with discrete attributes and from 5.7\% to 7.8\% on graphs with continuous attributes.

%\begin{table*}[ht]
%\centering 
%\scalebox{1}{\begin{tabular}{c l c c c c c c c}
%\specialrule{.1em}{.05em}{.05em} 
%& Method & MUTAG & PTC-MR & NCI1 & D\&D & NCI109 & COLLAB \\ [0.5ex]\hline
%\multirow{1}{*}{\parbox{2cm}{NoReg}}&RJW-2 & 89.8 $\pm$ 3.5 & 66.4 $\pm$ 4.6 & 86.1 $\pm$ 5.8 & 79.2 $\pm$ 3.9 & 85.6 $\pm$ 5.1 & 81.0 $\pm$ 4.1 \\ 
%\specialrule{.1em}{.05em}{.05em}
%\end{tabular}}\caption{Classification accuracy (\%) averaged over 10 runs on graphs with discrete attributes.} 
%\label{Tab:discrete-attributes-noreg-ablation} \vspace*{-0.3cm}
%\end{table*}

%\begin{table*}[ht]
%\centering 
%\scalebox{1}{\begin{tabular}{c l c c c c c c c}
%\specialrule{.1em}{.05em}{.05em} & Method & COX2 & BZR & ENZYMES & PROTEINS & COX2-MD & BZR-MD \\ [0.5ex]
%\hline
%\multirow{1}{*}{\parbox{2cm}{NoReg}}&RJW-2 & 78.6 $\pm$ 4.6 & 84.8 $\pm$ 5.6 & 75.4 $\pm$ 3.6 & 77.2 $\pm$ 4.8 & 75.9 $\pm$ 3.6 & 68.6 $\pm$ 4.5 \\ 
%\specialrule{.1em}{.05em}{.05em}
%\end{tabular}}\caption{Classification accuracy (\%) averaged over 10 runs on graphs with continuous attributes.} 
%\label{Tab:continuous-attributes-noreg-ablation} \vspace*{-0.3cm}
%\end{table*}

\subsection{Runtime Analysis}
We evaluate the running time of RWK against the other OT-based graph kernel methods, i.e., FGW \cite{titouan2019optimal} and WWL \cite{togninalli2019wasserstein}. All these methods and our method were implemented in python. For a fair comparison, we do not consider WL-PM \cite{nikolentzos2017matching} because its implementation was done using MATLAB. Our experiments are performed on a Linux server which has 12-core Intel(R) Core(TM) i7-7800X CPU @ 3.50GHz, NVIDIA GeForce GTX Titan Xp with 96GB of main memory. The runtime results are averaged over 10 runs. 

Figure \ref{fig:datasets_runtime_analysis} shows the results. We see that: (1) FGW is the fastest one, compared with WWL and RWK, over all benchmark datasets; (2) RWK is slower than WWL on the 4 small datasets but faster than WWL on the other 7 larger datasets. This demonstrates the good scalability of RWK for large datasets. 
The reason why RWK is more scalable than WWL is as follows. WWL considers an unregularized Wasserstein optimization problem, which is usually cast as a linear programming problem and costly to solve \cite{cuturi2013sinkhorn}. In its algorithm implementation, WWL uses the EMD solver \cite{cuturi2013sinkhorn}. Different from WWL, RWK considers a regularized optimal transport problem, which is solved by our SCG algorithm being designed upon the \emph{Sinkhorn-knopp} matrix scaling for speeding up the computation.

\vspace*{-1cm}
\section{Conclusions}\label{sec:conclusions}
In this work, we have proposed a new optimal transport distance metric (i.e. RW discrepancy) on graphs in a learning framework for graph kernels. This optimal transport distance metric can preserve both local and global structures between graphs during the transport, in addition to preserving features and their local variations. Two strongly convex regularization terms were designed to theoretically guarantee the convergence and numerical stability in finding an optimal assignment between graphs. To empirically validate our method, we have evaluated our method against the state-of-the-art methods for graph classification, and have also analyzed the impact of feature local variations on the performance and the impact of each key component (including regularization terms) on the performance. The results have shown that our method outperforms all state-of-the-art approaches significantly in all benchmark tasks. In future, we plan to extend the current work to optimal transport based graph generative models to preserve global and local structure of generated graphs. 

%Notes for supplementary materials:

%\begin{itemize}
%    \item A table for summary of datasets that were used in our
%experiments. %https://aaai.org/ocs/index.php/AAAI/AAAI17/paper/view/14494/14426
%    \item \A{\emph{Sinkhorn-knopp} algorithm and \emph{Line-search} algorithm}
%\end{itemize}

% \section*{Acknowledgement}
%We gratefully acknowledge the support of NVIDIA Corporation with the donation of the Titan Xp GPU used for this research.

\vspace{0.12cm}
\noindent \emph{Acknowledgement:} We gratefully acknowledge that the Titan Xp used for this research was donated by NVIDIA.
%\clearpage
% \bibliographystyle{plain}
\bibliography{references}

\begin{thebibliography}{55}
\providecommand{\natexlab}[1]{#1}
\providecommand{\url}[1]{\texttt{#1}}
\expandafter\ifx\csname urlstyle\endcsname\relax
  \providecommand{\doi}[1]{doi: #1}\else
  \providecommand{\doi}{doi: \begingroup \urlstyle{rm}\Url}\fi

\bibitem[Abu-El-Haija et~al.(2018)Abu-El-Haija, Perozzi, Al-Rfou, and
  Alemi]{abu2018watch}
S.~Abu-El-Haija, B.~Perozzi, R.~Al-Rfou, and A.~A. Alemi.
\newblock Watch your step: Learning node embeddings via graph attention.
\newblock In \emph{NeurIPS}, 2018.

\bibitem[Alvarez-Melis and Jaakkola(2018)]{alvarez2018gromov}
D.~Alvarez-Melis and T.~Jaakkola.
\newblock Gromov-wasserstein alignment of word embedding spaces.
\newblock In \emph{EMNLP}, 2018.

\bibitem[Borgwardt and Kriegel(2005)]{borgwardt2005shortest}
K.~M. Borgwardt and H.-P. Kriegel.
\newblock Shortest-path kernels on graphs.
\newblock In \emph{ICDM}, 2005.

\bibitem[Chapel et~al.(2020)Chapel, Alaya, and Gasso]{chapel2020partial}
L.~Chapel, M.~Alaya, and G.~Gasso.
\newblock Partial optimal transport with applications on positive-unlabeled
  learning.
\newblock In \emph{NeurIPS}, 2020.

\bibitem[Chen et~al.(2009)Chen, Gupta, and Recht]{chen2009learning}
Y.~Chen, M.~R. Gupta, and B.~Recht.
\newblock Learning kernels from indefinite similarities.
\newblock In \emph{ICML}, 2009.

\bibitem[Cuturi(2013)]{cuturi2013sinkhorn}
M.~Cuturi.
\newblock Sinkhorn distances: Lightspeed computation of optimal transport.
\newblock In \emph{NeurIPS}, 2013.

\bibitem[Debnath et~al.(1991)Debnath, Lopez~de Compadre, Debnath, Shusterman,
  and Hansch]{debnath1991structure}
A.~K. Debnath, R.~L. Lopez~de Compadre, G.~Debnath, A.~J. Shusterman, and
  C.~Hansch.
\newblock Structure-activity relationship of mutagenic aromatic and
  heteroaromatic nitro compounds. correlation with molecular orbital energies
  and hydrophobicity.
\newblock \emph{Journal of medicinal chemistry}, 34\penalty0 (2):\penalty0
  786--797, 1991.

\bibitem[Du et~al.(2019)Du, Hou, P{\'o}czos, Salakhutdinov, Wang, and
  Xu]{du2019graph}
S.~S. Du, K.~Hou, B.~P{\'o}czos, R.~Salakhutdinov, R.~Wang, and K.~Xu.
\newblock Graph neural tangent kernel: Fusing graph neural networks with graph
  kernels.
\newblock In \emph{NeurIPS}, 2019.

\bibitem[Feragen et~al.(2013)Feragen, Kasenburg, Petersen, de~Bruijne, and
  Borgwardt]{feragen2013scalable}
A.~Feragen, N.~Kasenburg, J.~Petersen, M.~de~Bruijne, and K.~Borgwardt.
\newblock Scalable kernels for graphs with continuous attributes.
\newblock In \emph{NeurIPS}, 2013.

\bibitem[Ferradans et~al.(2013)Ferradans, Papadakis, Rabin, Peyr{\'e}, and
  Aujol]{ferradans2013regularized}
S.~Ferradans, N.~Papadakis, J.~Rabin, G.~Peyr{\'e}, and J.-F. Aujol.
\newblock Regularized discrete optimal transport.
\newblock In \emph{SSVM}, 2013.

\bibitem[Flamary et~al.(2014)Flamary, Courty, Rakotomamonjy, and
  Tuia]{flamary2014optimal}
R.~Flamary, N.~Courty, A.~Rakotomamonjy, and D.~Tuia.
\newblock Optimal transport with laplacian regularization.
\newblock 2014.

\bibitem[Fournier and Guillin(2015)]{fournier2015rate}
N.~Fournier and A.~Guillin.
\newblock On the rate of convergence in wasserstein distance of the empirical
  measure.
\newblock \emph{Probability Theory and Related Fields}, 162\penalty0
  (3-4):\penalty0 707--738, 2015.

\bibitem[Gu and Guo(2012)]{gu2012learning}
S.~Gu and Y.~Guo.
\newblock Learning svm classifiers with indefinite kernels.
\newblock In \emph{AAAI}, volume~26, 2012.

\bibitem[Haussler(1999)]{haussler1999convolution}
D.~Haussler.
\newblock Convolution kernels on discrete structures.
\newblock Technical report, Technical report, Department of Computer Science,
  University of California~…, 1999.

\bibitem[Horv{\'a}th et~al.(2004)Horv{\'a}th, G{\"a}rtner, and
  Wrobel]{horvath2004cyclic}
T.~Horv{\'a}th, T.~G{\"a}rtner, and S.~Wrobel.
\newblock Cyclic pattern kernels for predictive graph mining.
\newblock In \emph{SIGKDD}, 2004.

\bibitem[Jaggi(2013)]{jaggi2013revisiting}
M.~Jaggi.
\newblock Revisiting frank-wolfe: Projection-free sparse convex optimization.
\newblock In \emph{ICML}, 2013.

\bibitem[Knight(2008)]{knight2008sinkhorn}
P.~A. Knight.
\newblock The sinkhorn--knopp algorithm: convergence and applications.
\newblock \emph{SIMAX}, 30\penalty0 (1):\penalty0 261--275, 2008.

\bibitem[Kriege et~al.(2016)Kriege, Giscard, and Wilson]{kriege2016valid}
N.~M. Kriege, P.-L. Giscard, and R.~Wilson.
\newblock On valid optimal assignment kernels and applications to graph
  classification.
\newblock In \emph{NeurIPS}, 2016.

\bibitem[Kriege et~al.(2020)Kriege, Johansson, and Morris]{kriege2020survey}
N.~M. Kriege, F.~D. Johansson, and C.~Morris.
\newblock A survey on graph kernels.
\newblock \emph{Applied Network Science}, 5\penalty0 (1):\penalty0 1--42, 2020.

\bibitem[Lacoste-Julien(2016)]{lacoste2016convergence}
S.~Lacoste-Julien.
\newblock Convergence rate of frank-wolfe for non-convex objectives.
\newblock \emph{arXiv preprint arXiv:1607.00345}, 2016.

\bibitem[Loosli et~al.(2015)Loosli, Canu, and Ong]{loosli2015learning}
G.~Loosli, S.~Canu, and C.~S. Ong.
\newblock Learning svm in krein spaces.
\newblock \emph{TPAMI}, 38\penalty0 (6):\penalty0 1204--1216, 2015.

\bibitem[Luss and d'Aspremont(2008)]{luss2008support}
R.~Luss and A.~d'Aspremont.
\newblock Support vector machine classification with indefinite kernels.
\newblock In \emph{NeurIPS}, 2008.

\bibitem[Maretic et~al.(2019)Maretic, El~Gheche, Chierchia, and
  Frossard]{maretic2019got}
H.~P. Maretic, M.~El~Gheche, G.~Chierchia, and P.~Frossard.
\newblock {GOT}: an optimal transport framework for graph comparison.
\newblock In \emph{NeurIPS}, 2019.

\bibitem[Maretic et~al.(2020)Maretic, Gheche, Minder, Chierchia, and
  Frossard]{maretic2020wasserstein}
H.~P. Maretic, M.~E. Gheche, M.~Minder, G.~Chierchia, and P.~Frossard.
\newblock Wasserstein-based graph alignment.
\newblock \emph{arXiv preprint arXiv:2003.06048}, 2020.

\bibitem[M{\'e}moli(2011)]{memoli2011gromov}
F.~M{\'e}moli.
\newblock Gromov--wasserstein distances and the metric approach to object
  matching.
\newblock \emph{Foundations of computational mathematics}, 11\penalty0
  (4):\penalty0 417--487, 2011.

\bibitem[Morris et~al.(2016)Morris, Kriege, Kersting, and
  Mutzel]{morris2016faster}
C.~Morris, N.~M. Kriege, K.~Kersting, and P.~Mutzel.
\newblock Faster kernels for graphs with continuous attributes via hashing.
\newblock In \emph{ICDM}, pages 1095--1100, 2016.

\bibitem[Neumann et~al.(2016)Neumann, Garnett, Bauckhage, and
  Kersting]{neumann2016propagation}
M.~Neumann, R.~Garnett, C.~Bauckhage, and K.~Kersting.
\newblock Propagation kernels: efficient graph kernels from propagated
  information.
\newblock \emph{Machine Learning}, 102\penalty0 (2):\penalty0 209--245, 2016.

\bibitem[Niepert et~al.(2016)Niepert, Ahmed, and Kutzkov]{niepert2016learning}
M.~Niepert, M.~Ahmed, and K.~Kutzkov.
\newblock Learning convolutional neural networks for graphs.
\newblock In \emph{ICML}, 2016.

\bibitem[Nikolentzos et~al.(2017)Nikolentzos, Meladianos, and
  Vazirgiannis]{nikolentzos2017matching}
G.~Nikolentzos, P.~Meladianos, and M.~Vazirgiannis.
\newblock Matching node embeddings for graph similarity.
\newblock In \emph{AAAI}, 2017.

\bibitem[Noma and Shimodaira(2002)]{noma2002dynamic}
H.~S. K.-i. Noma and K.~Shimodaira.
\newblock Dynamic time-alignment kernel in support vector machine.
\newblock \emph{NeurIPS}, 2002.

\bibitem[Oglic and G{\"a}rtner(2018)]{oglic2018learning}
D.~Oglic and T.~G{\"a}rtner.
\newblock Learning in reproducing kernel kre{\i}n spaces.
\newblock In \emph{ICML}, 2018.

\bibitem[Pekalska et~al.(2001)Pekalska, Paclik, and
  Duin]{pekalska2001generalized}
E.~Pekalska, P.~Paclik, and R.~P. Duin.
\newblock A generalized kernel approach to dissimilarity-based classification.
\newblock \emph{JMLR}, 2\penalty0 (Dec):\penalty0 175--211, 2001.

\bibitem[Peyr{\'e} et~al.(2016)Peyr{\'e}, Cuturi, and Solomon]{peyre2016gromov}
G.~Peyr{\'e}, M.~Cuturi, and J.~Solomon.
\newblock Gromov-wasserstein averaging of kernel and distance matrices.
\newblock In \emph{ICML}, 2016.

\bibitem[Qamra et~al.(2005)Qamra, Meng, and Chang]{qamra2005enhanced}
A.~Qamra, Y.~Meng, and E.~Y. Chang.
\newblock Enhanced perceptual distance functions and indexing for image replica
  recognition.
\newblock \emph{TPAMI}, 27\penalty0 (3):\penalty0 379--391, 2005.

\bibitem[Rieck et~al.(2019)Rieck, Bock, and Borgwardt]{rieck2019persistent}
B.~Rieck, C.~Bock, and K.~Borgwardt.
\newblock A persistent weisfeiler-lehman procedure for graph classification.
\newblock In \emph{ICML}, 2019.

\bibitem[Roth et~al.(2003)Roth, Laub, Kawanabe, and Buhmann]{roth2003optimal}
V.~Roth, J.~Laub, M.~Kawanabe, and J.~M. Buhmann.
\newblock Optimal cluster preserving embedding of nonmetric proximity data.
\newblock \emph{TPAMI}, 25\penalty0 (12):\penalty0 1540--1551, 2003.

\bibitem[Schacke(2004)]{schacke2004kronecker}
K.~Schacke.
\newblock On the kronecker product.
\newblock \emph{Master's thesis, University of Waterloo}, 2004.

\bibitem[Schieber et~al.(2017)Schieber, Carpi, D{\'\i}az-Guilera, Pardalos,
  Masoller, and Ravetti]{schieber2017quantification}
T.~A. Schieber, L.~Carpi, A.~D{\'\i}az-Guilera, P.~M. Pardalos, C.~Masoller,
  and M.~G. Ravetti.
\newblock Quantification of network structural dissimilarities.
\newblock \emph{Nature communications}, 8\penalty0 (1):\penalty0 1--10, 2017.

\bibitem[Shervashidze et~al.(2009)Shervashidze, Vishwanathan, Petri, Mehlhorn,
  and Borgwardt]{shervashidze2009efficient}
N.~Shervashidze, S.~Vishwanathan, T.~Petri, K.~Mehlhorn, and K.~Borgwardt.
\newblock Efficient graphlet kernels for large graph comparison.
\newblock In \emph{AISTATS}, pages 488--495, 2009.

\bibitem[Shervashidze et~al.(2011)Shervashidze, Schweitzer, Van~Leeuwen,
  Mehlhorn, and Borgwardt]{shervashidze2011weisfeiler}
N.~Shervashidze, P.~Schweitzer, E.~J. Van~Leeuwen, K.~Mehlhorn, and K.~M.
  Borgwardt.
\newblock Weisfeiler-lehman graph kernels.
\newblock \emph{JMLR}, 12\penalty0 (9), 2011.

\bibitem[Sinkhorn(1967)]{sinkhorn1967diagonal}
R.~Sinkhorn.
\newblock Diagonal equivalence to matrices with prescribed row and column sums.
\newblock \emph{The American Mathematical Monthly}, 74\penalty0 (4):\penalty0
  402--405, 1967.

\bibitem[Sutherland et~al.(2003)Sutherland, O'brien, and
  Weaver]{sutherland2003spline}
J.~J. Sutherland, L.~A. O'brien, and D.~F. Weaver.
\newblock Spline-fitting with a genetic algorithm: A method for developing
  classification structure- activity relationships.
\newblock \emph{Journal of chemical information and computer sciences},
  43\penalty0 (6):\penalty0 1906--1915, 2003.

\bibitem[Titouan et~al.(2019{\natexlab{a}})Titouan, Courty, Tavenard, and
  Flamary]{titouan2019optimal}
V.~Titouan, N.~Courty, R.~Tavenard, and R.~Flamary.
\newblock Optimal transport for structured data with application on graphs.
\newblock In \emph{ICML}, 2019{\natexlab{a}}.

\bibitem[Titouan et~al.(2019{\natexlab{b}})Titouan, Flamary, Courty, Tavenard,
  and Chapel]{vayer2019sliced}
V.~Titouan, R.~Flamary, N.~Courty, R.~Tavenard, and L.~Chapel.
\newblock Sliced gromov-wasserstein.
\newblock In \emph{NeurIPS}, 2019{\natexlab{b}}.

\bibitem[Togninalli et~al.(2019)Togninalli, Ghisu, Llinares-L{\'o}pez, Rieck,
  and Borgwardt]{togninalli2019wasserstein}
M.~Togninalli, E.~Ghisu, F.~Llinares-L{\'o}pez, B.~Rieck, and K.~Borgwardt.
\newblock Wasserstein weisfeiler-lehman graph kernels.
\newblock In \emph{NeurIPS}, 2019.

\bibitem[Villani(2003)]{villani2003topics}
C.~Villani.
\newblock \emph{Topics in optimal transportation}.
\newblock Number~58. American Mathematical Soc., 2003.

\bibitem[Vishwanathan et~al.(2010)Vishwanathan, Schraudolph, Kondor, and
  Borgwardt]{vishwanathan2010graph}
S.~V.~N. Vishwanathan, N.~N. Schraudolph, R.~Kondor, and K.~M. Borgwardt.
\newblock Graph kernels.
\newblock \emph{JMLR}, 11:\penalty0 1201--1242, 2010.

\bibitem[Weisfeiler and Lehman(1968)]{weisfeiler1968reduction}
B.~Weisfeiler and A.~A. Lehman.
\newblock A reduction of a graph to a canonical form and an algebra arising
  during this reduction.
\newblock \emph{Nauchno-Technicheskaya Informatsia}, 2\penalty0 (9):\penalty0
  12--16, 1968.

\bibitem[Xinyi and Chen(2018)]{xinyi2018capsule}
Z.~Xinyi and L.~Chen.
\newblock Capsule graph neural network.
\newblock In \emph{ICLR}, 2018.

\bibitem[Xu et~al.(2019{\natexlab{a}})Xu, Luo, and Carin]{xu2019scalable}
H.~Xu, D.~Luo, and L.~Carin.
\newblock Scalable gromov-wasserstein learning for graph partitioning and
  matching.
\newblock In \emph{NeurIPS}, 2019{\natexlab{a}}.

\bibitem[Xu et~al.(2019{\natexlab{b}})Xu, Luo, Zha, and Duke]{xu2019gromov}
H.~Xu, D.~Luo, H.~Zha, and L.~C. Duke.
\newblock Gromov-wasserstein learning for graph matching and node embedding.
\newblock In \emph{ICML}, 2019{\natexlab{b}}.

\bibitem[Xu et~al.(2019{\natexlab{c}})Xu, Hu, Leskovec, and
  Jegelka]{xu2018powerful}
K.~Xu, W.~Hu, J.~Leskovec, and S.~Jegelka.
\newblock How powerful are graph neural networks?
\newblock In \emph{ICLR}, 2019{\natexlab{c}}.

\bibitem[Yanardag and Vishwanathan(2015)]{yanardag2015deep}
P.~Yanardag and S.~Vishwanathan.
\newblock Deep graph kernels.
\newblock In \emph{SIGKDD}, 2015.

\bibitem[Zhang et~al.(2018{\natexlab{a}})Zhang, Cui, Neumann, and
  Chen]{zhang2018end}
M.~Zhang, Z.~Cui, M.~Neumann, and Y.~Chen.
\newblock An end-to-end deep learning architecture for graph classification.
\newblock In \emph{AAAI}, 2018{\natexlab{a}}.

\bibitem[Zhang et~al.(2018{\natexlab{b}})Zhang, Wang, Xiang, Huang, and
  Nehorai]{zhang2018retgk}
Z.~Zhang, M.~Wang, Y.~Xiang, Y.~Huang, and A.~Nehorai.
\newblock Retgk: Graph kernels based on return probabilities of random walks.
\newblock In \emph{NeurIPS}, 2018{\natexlab{b}}.

\end{thebibliography}

\end{document}